\DeclareMathOperator*{\argmax}{arg\,max}
\DeclareMathOperator*{\argmin}{arg\,min}
\newcommand\R{\mathbb{R}}
\newcommand\E{\mathbb{E}}
\renewcommand\P{\mathbb{P}}
\newcommand\tT{\mathrm{T}}
\newcommand\dx{\mathrm{d}}
\newtheorem{lemma}{Lemma}
\newtheorem{proposition}[lemma]{Proposition}
\newtheorem{remark}[lemma]{Remark}
\begin{document}
\title{PatchNR: Learning from Very Few Images\\ by Patch Normalizing Flow Regularization}
\author{
    Fabian Altekr\"uger\footnotemark[1] \footnotemark[2] \and
    Alexander Denker \footnotemark[3] \and 
    Paul Hagemann\footnotemark[2] \and
    Johannes Hertrich\footnotemark[2] \and 
    Peter Maass \footnotemark[3] \and 
    Gabriele Steidl\footnotemark[2]
}

\maketitle
\renewcommand{\thefootnote}{\fnsymbol{footnote}}

\footnotetext[1]{
Department of Mathematics, 
Humboldt-Universit\"at zu Berlin, 
Unter den Linden 6, 
D-10099 Berlin, Germany,
fabian.altekrueger@hu-berlin.de
}
\footnotetext[2]{
Institute of Mathematics,
Technische Universit\"at Berlin,
Stra{\ss}e des 17. Juni 136, 
D-10623 Berlin, Germany,
hagemann@math.tu-berlin.de, j.hertrich@math.tu-berlin.de, steidl@math.tu-berlin.de
} 
\footnotetext[3]{
 Center for Industrial Mathematics,
University of Bremen,
Bibliothekstra{\ss}e 5,
D-28359 Bremen, Germany,
adenker@uni-bremen.de, pmaass@math.uni-bremen.de
} 

\vspace{10pt}
\renewcommand{\thefootnote}{\arabic{footnote}}
\begin{abstract}
Learning neural networks using only few available information is
an important ongoing research topic with tremendous potential for applications.
In this paper, we introduce a powerful regularizer 
for the variational modeling of inverse problems in imaging.
 Our regularizer, called patch normalizing flow regularizer (patchNR), 
involves a normalizing flow learned on small patches of very few images.
In particular, the training is independent of the considered inverse problem such that
the same regularizer
can be applied for different forward operators acting on the same class of images.
By investigating the distribution of patches versus those of the whole image class,
we prove that our  model is indeed a MAP approach.
Numerical examples 
for low-dose and limited-angle computed tomography (CT) as well as  superresolution of  material images  demonstrate that our method
provides very high quality results.
The training set consists of just six images for CT and one image for superresolution.
Finally, we combine our patchNR with ideas from internal learning
for performing superresolution of natural images directly from the low-resolution observation
without  knowledge of any high-resolution image.
\end{abstract}

\section{Introduction}

Solving inverse problems with limited access to training data is an active field of research. 
In inverse problems, we aim to reconstruct an unknown ground truth image $x$ from some observation
\begin{equation}\label{eq_IP}
y=\mathrm{noisy}(f(x)),
\end{equation}
where $f$ is an ill-posed forward operator and ``$\mathrm{noisy}$'' describes some noise model. In the recent years learning-based reconstruction methods as
supervisely trained neural networks on large datasets \cite{JMFU17,RFB15a} 
and conditional generative models \cite{ALKRK2019,HHS2021,MO2014,SLY2015}
attained a lot of attention. However, for many applications like medical or material imaging, the acquisition of a large database of  ground truth images or  pairs of ground truth images
and observations is very costly or even impossible \cite{Kohli2017, xu2017enhanced}. 
Model-based approaches assume that 
the forward operator $f$ is known and aim to minimize a variational functional of the form
\begin{equation}\label{eq_variational}
\mathcal J(x;y)=\mathcal D(f(x),y)+ \lambda \mathcal R(x), ~\lambda >0,
\end{equation}
where $\mathcal D$ is a data-fidelity term which depends on the noise model and measures how well the reconstruction fits to the observation and $\mathcal R$ is a regularizer which copes with the ill-posedness and 
incorporates prior information.
Over the last years, learned regularizers like
the total deep variation \cite{KEKP2020,KEKP2021} or adversarial regularizers \cite{lunz2018adversarial,MCOS2021,prost2021learning}
as well as extensions of plug-and-play methods \cite{gilton2019,SVWB2016,VBW2013}
with learned denoisers \cite{HNS2021, HLP22, REM2017,ZLZZ2021}  showed promising results,
see \cite{arridge2019solving,OJMBDW2020} for an overview.

Furthermore, many papers leveraged the tractability of the likelihood of normalizing flows to learn a prior \cite{ADLAH2020,Helminger2021GenericIR, Wei2022DeepUW,WLD2021}. They utilize the invertiblity to optimize over the range of the flow together with the Gaussian assumption on the latent space. Also, diffusion models \cite{kawar2022denoising,kawar2021snips, songermongrad,song2022solving} have shown great generative modelling capabilities and have been used as a prior for inverse problems. Moreover, other generative models, such as GANs \cite{ATKP2018,gangoodfellow, pan2020dgp} or VAEs \cite{kingmaautoenc}, have been used as a regularizer, see the recent review \cite{Duff21} and references therein.
However, even if these methods allow an unsupervised reconstruction, their training is often computationally costly 
and a huge amount of training images is required. 

One possibility to reduce the training effort consists in using only small image patches.
Denoising methods based on the comparison of similar patches provided state-of-the-art methods
\cite{BCM2005,DFKE09,LNPS2017,LBM2013} for a long time. Recently,
the approximation of \emph{patch distributions} of images was successfully exploited in certain papers
\cite{AH2022,delon2018gaussian, Granot_2022_CVPR,Hertrich21,HNABBSS2020,HBD2018,SJ2016}. 
In particular, \cite{ZW2011} proposed the negative log likelihood of all patches of an image
as a regularizer, where the underlying patch distribution was assumed to follow a Gaussian mixture model (GMM)
which parameters were learned from few clean images.
This method is still competitive to many approaches based on deep learning 
and several extensions were suggested recently \cite{FW2021,PDDN2019,STA2021}.
However, even though GMMs can approximate any probability density function if the number of components is large enough, 
they suffer from limited flexibility in case of a fixed number of components, see \cite{GW2000} and the references therein.
Moreover the subsequent reconstruction procedure detailed in  \cite{ZW2011} is computationally expensive. 

In this paper, we propose to use a regularizer which incorporates a normalizing flow (NF) learned on small image patches, usually of size $6\times 6$.
NFs were introduced in  \cite{dinhrnvp,varinfNF}, see also \cite{RH2021} for its continuous counterpart. They build upon invertible neural networks and allow for explicit density evaluation.
Our PatchNR consists of a NF which is trained to approximate the distribution of patches. As the structure of small patches is usually much simpler than those of whole images, it appears that their approximation is more accurate.
Moreover, as a large data base of patches can be extracted from few images, we require only a very small amount of training images. Once the patchNR is learned, 
we use the negative log likelihood of all patches as regularizer in \eqref{eq_variational} for its reconstruction.
Indeed we will prove that our model can be obtained by a maximum a posteriori (MAP) approach.
Since the regularizer is only specific to the considered image class, but not to the inverse problem,
it can be applied for different forward operators as, e.g., for low-dose CT and limited-angle CT without additional training.
This is in contrast to data-based methods as FBP+UNet, where the network has to be trained for each new forward operator separately.

We demonstrate by numerical examples that our patchNR admits high quality results for low-dose and limited-angle CT as well as superresolution. 
Moreover, we combine patchNRs with ideas from internal learning to perform superresolution on natural images without any training data. 
Internal learning \cite{GBI2009,SCI2018} is based on the observation that the patches within natural images are self-similar across the scales.
In our case, this leads to the idea to train the patchNR on the low-resolution observation instead of a dataset of training images.
We demonstrate on the BSD68 dataset that zero-shot superresolution with patchNRs outperforms comparable methods including the original ZSSR paper by \cite{SCI2018}.
Finally, let us mention that recent works have explored meta-learning 
for efficient one gradient step reconstruction \cite{Soh_2020_CVPR}, 
applications to  light field superresolution \cite{Cheng_2021_CVPR} and 
CT superresolution \cite{ct_zero}.

The paper is organized as follows: We start by explaining the training of our patchNR 
and the variational reconstruction model in Section~\ref{sec_var_patchnf}.
Our approach is integrated into the MAP framework in Section~\ref{sec_analysis_patchnf}, 
i.e., the patchNR defines a probability density on the full image space. 
In Section~\ref{Sec_experiments}, we evaluate the performance of our model in CT and superresolution and use the patchNR also for zero-shot superresolution. Finally, conclusions are drawn in Section~\ref{sec:conclusions}.

\section{Patch Normalizing Flows in Variational Modeling}\label{sec_var_patchnf}

In the following, we assume that we are given a small number of high quality example images $x_1,...,x_M \in \R^{d_1 \times d_2}$ - indeed $M=1$ or $M=6$ in our numerical examples -
from a certain image class as CT, material or texture images. 
Our method consists of two steps, namely
i) learning a NF for approximating the patch distribution of
the example images, 
and 
ii) establishing a variational model which incorporates the learned patchNR in the regularizer.

\paragraph{Learning patchNRs}
Let $\text{p}_1,...,\text{p}_N\in\R^{s_1 \times s_2}$, $s_i \ll d_i$, $i=1,2$ denote all
possibly overlapping patches of the example images, 
where we assume that the patches $\text{p}_1,...,\text{p}_N$ are realizations 
of an absolute continuous probability distribution $Q$ with density $q$.
We aim to approximate $q$ by a NF.
For simplicity, we rearrange the images and the patches columnwise into vectors of size $d = d_1d_2$ and  $s \coloneqq s_1 s_2$, respectively.
Then we learn a diffeomorphism $\mathcal T = \mathcal{T}_\theta \colon\R^s\to\R^s$ such that 
$Q\approx\mathcal T_\#P_Z \coloneqq P_Z \circ \mathcal T^{-1}$, 
where $P_Z$ is a $s$-dimensional standard normal distribution.
To this end, we set our normalizing flow $\mathcal T$ to be an invertible neural network with parameters collected in $\theta$. There were several approaches in the literature to construct such invertible neural networks \cite{AKRK2019,CBDJ2019,dinhrnvp,KD2018,Lugmayr20}. 
In this paper, we adapt the architecture from \cite{AKRK2019}. In order to train our NF, we aim to minimize the (forward) Kullback-Leibler divergence
\begin{align}
\mathrm{KL}(Q,\mathcal {T}_\#P_Z)
\coloneqq \int_{\mathbb R^s} \log \Big( \frac{\dx Q}{\dx \mathcal {T}_\#P_Z}\Big) \dx Q
&=\E_{\text{p}\sim Q} \big[\log\big(q(\text{p})\big)]-\E_{\text{p}\sim Q}\big[\log\big(p_{\mathcal T_\#P_Z}(\text{p})\big) \big],
\end{align}
where we set the expression to $+\infty$ if $Q$ is not absolutely continuous with respect to $\mathcal {T}_\#P_Z$.
The first term on the right-hand side is a constant independent of the parameters $\theta$. Thus, 
using the change-of-variables formula for probability density functions of push-forward measures
$p_{\mathcal T_\#P_Z}=p_Z(\mathcal T^{-1})|\mathrm{det}(\nabla \mathcal T^{-1})|$, 
we obtain that the above formula is
up to a constant equal to
$$
-\E_{\text{p}\sim Q}\big[\log\big(p_Z(\mathcal T^{-1}(\text{p})\big)+\log\big(\big|\mathrm{det}(\nabla \mathcal T^{-1}(\text{p}))\big|\big)\big],
$$
where $\nabla \mathcal{T}^{-1}$ denotes the Jacobian matrix of $\mathcal{T}^{-1}$.
By replacing the expectation by the empirical mean of our training set, inserting the standard normal density $p_Z$ and neglecting some constants, we finally obtain the loss function
\begin{equation} \label{equ_patchNR_training}
    \mathcal L(\theta) \coloneqq \frac{1}{N}\sum_{i=1}^N \frac12 \big\|\mathcal T^{-1}(\text{p}_i) \big\|^2-\log\big(\big|\mathrm{det}(\nabla \mathcal T^{-1}(\text{p}_i))\big|\big).
\end{equation}

We minimize this loss function using the Adam optimizer \cite{KB2015}.

\paragraph{Reconstruction with PatchNRs}

Once the patchNR $\mathcal T$ is learned, 
we aim to use it within a regularizer of the variational model \eqref{eq_variational}
to solve the inverse problem \eqref{eq_IP}.
To this end, denote by $E_i\colon\R^{d}\to \R^s$, $i=1,\ldots,N_p$ the operator, 
which extracts the $i$-th (vectorized) patch from the unknown (vectorized) image $x \in \R^{d}$. 
Then, we define our regularizer by the negative log likelihood of all patches under the probability distribution learned by the patchNR.
More precisely, we define the patchNR based prior
$$
\frac1{s}\sum_{i=1}^{N_p}-\log\big(p_{\mathcal T_\#P_Z}\big(E_i(x)\big)\big),
$$
where $N_p$ is the number of patches in the image $x$ and $s=s_1 s_2$ the number of pixels in a patch.
Similar to \eqref{equ_patchNR_training}, this can be reformulated by the change-of-variables formula and by ignoring some constants as
\begin{align}
\mathrm{patchNR}(x) &\coloneqq 
\frac1{s}\sum_{i=1}^{N_p}\frac{1}{2}\big\|\mathcal T^{-1} \big(E_i(x) \big)\big\|^2
-
\log\big(\big|\mathrm{det}\big(\nabla \mathcal T^{-1} \big(E_i(x) \big) \big) \big|\big).
\end{align}

Note that if we ignore the boundary of the image, the patchNR is translation invariant. That is, a translation of the image does not change the value of the regularizer.
Now, we reconstruct our ground truth by finding a minimizer of the variational problem
\begin{align}
\label{eq:PatchNR_VariationalFormulation}
\mathcal J(x; y)=\mathcal D(f(x),y) + \lambda \, \mathrm{patchNR}(x), \quad \lambda >0.
\end{align}
For the minimization, we use the Adam optimizer \cite{KB2015}.
To speed up the numerical computations, we do not consider all overlapping patches in $x$, but 
choose randomly a subset of $N_p$ possibly overlapping patches in each iteration. 
Note that the resulting optimization problem is non-convex and therefore the choice of the initialization is important. In our experiments we initialize this optimization with a coarse reconstruction, i.e., a bicubic interpolation for superresolution or the filtered backprojection (FBP) for CT.

\begin{remark}[Relation to EPLL]
Our patchNR is closely related to the expected patch log likelihood (EPLL) prior proposed by \cite{ZW2011}.
Here, the authors use the prior defined as
\begin{align}
\mathrm{EPLL}(x)=\frac{1}{N_p}\sum_{i=1}^{N_p} -\log\big(p\big(E_i(x)\big)\big),
\end{align}
where $p$ is the probability density function of a GMM approximating the patch distribution of the image class of interest.
However, GMMs have a limited expressiveness and can only hardly approximate complicated probability distributions induced by patches \cite{GW2000}.
Further, the reconstruction process proposed in \cite{ZW2011} is computationally very costly even though a reduction of the computational effort was considered in several papers \cite{PDDN2019,STA2021}.
Indeed, we will show in our numerical examples that the patchNR clearly outperforms the reconstructions from EPLL.
\end{remark}

\section{Analysis of Patch Normalizing Flows} \label{sec_analysis_patchnf}

In this section, we investigate the patch distribution which is approximated by the patchNR. More precisely, we show that any probability density on the class of images induces a probability density on the space of patches
and vice versa. We will use this to interpret the minimization of the variational problem  \eqref{eq:PatchNR_VariationalFormulation} as maximizing the posterior distribution.

\begin{remark}
Considering the inverse problem \eqref{eq_IP} as a Bayesian inverse problem 
\begin{align*}
    Y = \mathrm{noisy}(f(X)),
\end{align*}
where $X$ and $Y$ are random variables,  Bayes' theorem implies that maximizing the log-posterior distribution $\log(p_{X|Y=y}(x))$ can be written as
\begin{align*}
\argmax_x \{\log (p_{X|Y=y}(x) )\}
&=
\argmin_x \Big\{-\log p_{Y|X=x}(y) - \log p_X(x) \Big\}.
\end{align*}
Consequently, the data-fidelity term and the regularizer in \eqref{eq_variational} correspond to the negative log likelihood $\mathcal{D}(f(x),y) = - \log p_{Y|X=x}(y)$ and to the negative log prior $\mathcal{R}(x) = - \log p_X(x)$, respectively.
\end{remark}

Let $X\colon\Omega\to\R^{d}$ with $X\sim P_X$ be a $d$-dimensional random variable on the space of images.
By $\tilde E_i\colon\R^{d}\to\R^{d-s}$ we denote the operator which extracts all pixels from a $d$-dimensional image, which do not belong to the $i$-th patch.
Let $I\colon\Omega\to\{1,...,N_p\}$ be a random variable which follows the uniform distribution on $\{1,...,N_p\}$.
Then, the random variable $\omega\mapsto E_{I(\omega)}(X(\omega))$ describes the selection of a random patch from a random image.
We call the distribution $Q$ of $E_I(X)$ the patch distribution corresponding to $P_X$. The following lemma provides an explicit formula for the density of $Q$.

\begin{lemma}\label{lem_density}
Let $P_X$ be a probability distribution on $\R^{d}$ with density $p_X$. 
Then, also the corresponding patch distribution $Q$ is absolute continuous with density
\begin{align}
q(\textnormal{p})=\frac{1}{N_p}\sum_{i=1}^{N_p}\int_{\R^{d-s}}p_X\big(E_i^\tT(\textnormal{p})+\tilde E_i^\tT(\tilde{x})\big) \dx \tilde{x}.
\end{align}
\end{lemma}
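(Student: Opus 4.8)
The plan is to condition on the random patch index and then identify each conditional law as a marginal of $p_X$. The structural fact that drives the argument is that, for each fixed $i$, stacking the two extraction operators into the map $x\mapsto (P_i x,\tilde P_i x)$ merely rearranges the $d$ coordinates of $x$: $P_i$ collects the $s$ pixels belonging to the $i$-th patch and $\tilde P_i$ collects the remaining $d-s$ pixels. Consequently this stacked map is a bijective linear isometry of $\R^{d}$ onto $\R^{s}\times\R^{d-s}$ with inverse $(\textnormal{p},\tilde x)\mapsto P_i^\tT\textnormal{p}+\tilde P_i^\tT\tilde x$; equivalently $P_i^\tT P_i+\tilde P_i^\tT\tilde P_i=\mathrm{Id}_{\R^{d}}$, and its Jacobian determinant has absolute value $1$.

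First I would decompose $Q$ over the value of $I$. Since $I$ is uniformly distributed and drawn independently of $X$, the law of total probability gives, for every Borel set $A\subseteq\R^{s}$,
$$
Q(A)=\P\big(P_I(X)\in A\big)=\frac{1}{N_p}\sum_{i=1}^{N_p}\P\big(P_i(X)\in A\big).
$$
Thus it suffices to show that, for each $i$, the patch marginal $P_i(X)$ has Lebesgue density $\textnormal{p}\mapsto\int_{\R^{d-s}}p_X\big(P_i^\tT\textnormal{p}+\tilde P_i^\tT\tilde x\big)\,\dx\tilde x$ and then to average over $i$.

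Second, I would fix $i$ and apply the change-of-variables formula to the coordinate permutation from the first paragraph. Because its Jacobian is unimodular, the push-forward of $P_X$ under $x\mapsto(P_i x,\tilde P_i x)$ has the joint density $(\textnormal{p},\tilde x)\mapsto p_X\big(P_i^\tT\textnormal{p}+\tilde P_i^\tT\tilde x\big)$ on $\R^{s}\times\R^{d-s}$. Integrating this joint density over the complementary variable $\tilde x$---which is justified by Tonelli's theorem since $p_X\ge 0$---produces precisely the stated density of the first marginal $P_i(X)$. Substituting into the previous display yields the claimed formula for $q$, and the same Tonelli argument shows that the nonnegative function $q$ integrates to $1$, so that $Q$ is indeed absolutely continuous.

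The only genuinely delicate point is the bookkeeping of the change of variables: one must check carefully that $(P_i,\tilde P_i)$ is a true permutation of coordinates, so that the reconstruction identity $x=P_i^\tT(P_i x)+\tilde P_i^\tT(\tilde P_i x)$ holds and the Jacobian is $\pm1$, and then confirm measurability and integrability of the integrand so that Fubini--Tonelli applies and the marginal density is well defined almost everywhere. Everything else is a routine assembly of the conditional decomposition with this partial integration.
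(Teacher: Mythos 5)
Your proof is correct and follows essentially the same route as the paper's: both decompose $Q$ by conditioning on the uniform index $I$, and both exploit the orthogonal splitting $\R^{d}=\mathrm{Im}(P_i^\tT)\oplus\mathrm{Im}(\tilde P_i^\tT)$ (equivalently $P_iP_i^\tT=I$, $P_i\tilde P_i^\tT=0$) together with Fubini--Tonelli to marginalize out the complementary coordinates. Your phrasing via the unimodular coordinate permutation and its joint density is just a repackaging of the paper's substitution $x=P_i^\tT\textnormal{p}+\tilde P_i^\tT\tilde{x}$ inside the integral of $1_A(P_i(x))p_X(x)$.
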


\begin{proof}
Let $A\in\mathcal B(\R^s)$ be an arbitrary Borel set. Then, we have by Bayes' theorem that
\begin{align*}
Q(A)&=\sum_{i=1}^{N_p} \P(I=i){E_i}_\#P_X(A)
=\frac{1}{N_p}\sum_{i=1}^{N_p}P_X(E_i^{-1}(A)) 
=\frac{1}{N_p}\sum_{i=1}^{N_p}\int_{\R^d}1_{A}\big(E_i(x)\big)p_X(x)\dx x. 
\end{align*}
Now, we use the decomposition 
$$
\R^{d}=\mathrm{Ker}(E_i)\oplus\mathrm{Im}(E_i^\tT)=\mathrm{Im}(\tilde E_i^\tT)\oplus\mathrm{Im}(E_i^\tT).
$$
With Fubini's theorem, $E_iE_i^\tT=I$ and $E_i\tilde E_i^\tT=0$, this is equal to
\begin{align*}
Q(A)
&=\frac{1}{N_p}\sum_{i=1}^{N_p}\int_{\R^s}\int_{\R^{d-s}}1_A(E_i(E_i^\tT(\text{p})+\tilde E_i^\tT(\tilde{x}))) p_X(E_i^\tT(\text{p})+\tilde E_i^\tT(\tilde{x}))\dx \tilde{x}\dx \text{p}\\
&=\frac{1}{N_p}\sum_{i=1}^{N_p}\int_{\R^s}\int_{\R^{d-s}}1_A(\text{p})p_X(E_i^\tT(\text{p})+\tilde E_i^\tT(\tilde{x}))\dx \tilde{x}\dx \text{p}\\
&=\int_A\frac{1}{N_p}\sum_{i=1}^{N_p}\int_{\R^{d-s}}p_X(E_i^\tT(\text{p})+\tilde E_i^\tT(\tilde{x}))\dx \tilde{x}\dx \text{p}.
\end{align*}
This proves the claim.
\end{proof}

For the proof of the reverse direction, namely that given a  probability measure $Q$ on the space of patches, 
the patchNR defines a probability density function
on the space of all images, we need 
the following lemma. 
It states that the density induced by a NF with a Gaussian latent distribution is up to a constant 
bounded from below and above by the density of certain normal distributions.
In particular, it has exponential asymptotic decay.
Note that similar questions about bi-Lipschitz continuous diffeomorphisms were investigated more detailed in \cite{HN2021,JKYB2020}. 

\begin{lemma}\label{lem_decay}
Let $\mathcal T\colon\R^s\to\R^s$ be a diffeomorphism with Lipschitz constants $\mathrm{Lip}(\mathcal T)\leq K$ and $\mathrm{Lip}(\mathcal T^{-1})\leq L$ and let $P_Z=\mathcal N(0,I)$. Then, it holds
\begin{align}
\frac{1}{L^s K^s}\mathcal N(\textnormal{p}|\mathcal T(0),\tfrac{1}{L^2} I)&\leq p_{\mathcal T_\#P_Z}(\textnormal{p}) \leq L^sK^s\, \mathcal N(\textnormal{p}|\mathcal T(0), K^2 I)
\end{align}
for any $\textnormal{p}\in\R^s$.
\end{lemma}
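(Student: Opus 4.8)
The plan is to reduce everything to the change-of-variables formula $p_{\mathcal{T}_\#P_Z}(\mathrm{p}) = p_Z(\mathcal{T}^{-1}(\mathrm{p}))\,|\det(\nabla\mathcal{T}^{-1}(\mathrm{p}))|$ and to estimate the two factors separately. Since $p_Z(z) = (2\pi)^{-s/2}\exp(-\tfrac12\|z\|^2)$ is the standard normal density, the claimed two-sided bound amounts to controlling the Jacobian determinant and the norm $\|\mathcal{T}^{-1}(\mathrm{p})\|$ in terms of $K$ and $L$.

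For the determinant factor, I would translate the Lipschitz hypotheses into singular-value bounds for $A \coloneqq \nabla\mathcal{T}^{-1}(\mathrm{p})$. The bound $\mathrm{Lip}(\mathcal{T}^{-1})\le L$ gives $\|A\|\le L$ in operator norm, so every singular value of $A$ is at most $L$; on the other hand, the inverse function theorem yields $A = (\nabla\mathcal{T}(\mathcal{T}^{-1}(\mathrm{p})))^{-1}$, and $\mathrm{Lip}(\mathcal{T})\le K$ bounds the largest singular value of $\nabla\mathcal{T}$ by $K$, hence the smallest singular value of $A$ is at least $1/K$. As the determinant is the product of the $s$ singular values, this gives $K^{-s}\le|\det A|\le L^s$.

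For the Gaussian factor I would use $\mathcal{T}^{-1}(\mathcal{T}(0)) = 0$ to obtain $\tfrac1K\|\mathrm{p}-\mathcal{T}(0)\|\le\|\mathcal{T}^{-1}(\mathrm{p})\|\le L\|\mathrm{p}-\mathcal{T}(0)\|$, where the upper estimate is immediate from $\mathrm{Lip}(\mathcal{T}^{-1})\le L$ and the lower one follows by applying $\mathrm{Lip}(\mathcal{T})\le K$ to $\mathrm{p}-\mathcal{T}(0) = \mathcal{T}(\mathcal{T}^{-1}(\mathrm{p}))-\mathcal{T}(0)$. Inserting the squared norms into $p_Z$ turns these into exponential bounds with inner constants $1/K^2$ and $L^2$. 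It then remains to pair the estimates and match normalizations: for the upper bound, $|\det A|\le L^s$ together with $(2\pi)^{-s/2}\exp(-\tfrac{1}{2K^2}\|\mathrm{p}-\mathcal{T}(0)\|^2) = K^s\,\mathcal{N}(\mathrm{p}|\mathcal{T}(0),K^2 I)$ reproduces the prefactor $L^s K^s$, and the lower bound is symmetric, using $|\det A|\ge K^{-s}$ and $(2\pi)^{-s/2}\exp(-\tfrac{L^2}{2}\|\mathrm{p}-\mathcal{T}(0)\|^2) = L^{-s}\,\mathcal{N}(\mathrm{p}|\mathcal{T}(0),\tfrac{1}{L^2}I)$, the covariance scalings supplying exactly the extra factors $K^s$ and $L^{-s}$.

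I expect the only genuinely delicate point to be the passage from the Lipschitz constants to the singular-value (and hence determinant) bounds, which relies on $\mathcal{T}$ being a diffeomorphism so that $\nabla\mathcal{T}$ is invertible everywhere and the inverse function theorem applies pointwise. Everything else is a direct substitution; had one only assumed a bi-Lipschitz homeomorphism, one would instead invoke Rademacher's theorem and argue almost everywhere, in the spirit of \cite{HN2021,JKYB2020} cited before the lemma.
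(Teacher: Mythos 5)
Your proposal is correct and follows essentially the same route as the paper's proof: change of variables, the two-sided bound $\tfrac1K\|\mathrm p-\mathcal T(0)\|\le\|\mathcal T^{-1}(\mathrm p)\|\le L\|\mathrm p-\mathcal T(0)\|$, and the determinant bounds $K^{-s}\le|\det\nabla\mathcal T^{-1}|\le L^s$ via the inverse function theorem. The only difference is that you justify the determinant bounds explicitly through singular values, which the paper asserts more tersely; this is a welcome refinement, not a different argument.
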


\begin{proof}
Using the Lipschitz continuity of $\mathcal T$, we obtain
\begin{align}
\frac1{K^2}\|\text{p}-\mathcal T(0)\|^2&=\frac1{K^2}\|\mathcal T(\mathcal T^{-1}(\text{p}))-\mathcal T(0)\|^2  \leq\|\mathcal T^{-1}(\text{p})-0\|^2=\|\mathcal T^{-1}(\text{p})\|^2.
\end{align}
Now, applying the change-of-variables formula and that $|\mathrm{det}(\nabla \mathcal T^{-1}(\text{p}))|\leq L^s$, we conclude
\begin{align*}
p_{\mathcal T_\#P_Z}(\text{p})&=p_Z(\mathcal T^{-1}(\text{p}))|\mathrm{det}(\nabla \mathcal T^{-1}(\text{p}))| \leq \frac{L^s}{(2\pi)^{s/2}}\exp(-\tfrac12\|\mathcal T^{-1}(\text{p})\|^2)\\
&\leq \frac{L^s}{(2\pi)^{s/2}}\exp(-\tfrac1{2K^2}\|\text{p}-\mathcal T(0)\|^2)
=L^sK^{s}\,\mathcal N(\text{p}|\mathcal T(0),K^2 I).
\end{align*}
This shows the second inequality. For the first inequality, note that by the inverse function theorem
$$
\nabla \mathcal T^{-1}(\text{p})=\nabla \mathcal T^{-1}(\mathcal T(\mathcal T^{-1}(\text{p})))=(\nabla \mathcal T(\mathcal T^{-1}(\text{p})))^{-1}.
$$
Using the Lipschitz continuity of $\mathcal T$, this implies 
$$
\big| \mathrm{det}\big( \nabla \mathcal T^{-1}(\text{p}) \big) \big|
=
\Big| \mathrm{det} \Big(\nabla \mathcal T\big( \mathcal T^{-1}(\text{p}) \big)\Big) \Big|^{-1}
\geq 1/K^s.
$$
Further, by the Lipschitz continuity of $\mathcal T^{-1}$ it holds that
$$
\|\mathcal T^{-1}(\text{p})\|^2=\|\mathcal T^{-1}(\text{p})-\mathcal T^{-1}(\mathcal T(0))\|^2\leq L^2\|\text{p}-\mathcal T(0)\|^2.
$$
Putting the things together yields
\begin{align*}
p_{\mathcal T_\#P_Z}(\text{p})&=p_Z(\mathcal T^{-1}(\text{p}))|\mathrm{det}(\nabla \mathcal T^{-1}(\text{p}))| \geq \frac{1}{K^s(2\pi)^{s/2}} \exp(-\tfrac12\|\mathcal T^{-1}(\text{p})\|^2)\\
&\geq \frac{1}{K^s(2\pi)^{s/2}}\exp(-\tfrac{L^2}{2}\|\text{p}-\mathcal T(0)\|^2) =\frac{1}{L^sK^s}\,\mathcal N(\text{p}|\mathcal T(0),\tfrac{1}{L^2} I).
\end{align*}
This completes the proof.
\end{proof}

Now, we can show that the patchNR defines a probability distribution on the space of all images. This allows a MAP interpretation of the variational problem \eqref{eq:PatchNR_VariationalFormulation}.

\begin{proposition}\label{prop_induced_patchNF}
Let $P_Z=\mathcal N(0,I)$ and let $\mathcal T\colon\R^s\to\R^s$ be a bi-Lipschitz diffeomorphism, i.e., $\mathrm{Lip}(\mathcal T)<\infty$ and $\mathrm{Lip}(\mathcal T^{-1})<\infty$. Then, for any $\rho>0$, the function
$\varphi(x)=\exp(-\rho\,\mathrm{patchNR}(x))$ belongs to $L^1(\R^{d})$, where
\begin{align}
\mathrm{patchNR}(x)=\frac{1}{s}\sum_{i=1}^{N_p}-\log\big(p_{\mathcal T_\#P_Z}(E_i(x))\big).
\end{align}
\end{proposition}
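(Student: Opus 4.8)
The plan is to exploit the explicit Gaussian upper bound furnished by Lemma~\ref{lem_decay} and to show that the resulting integrand decays like a Gaussian in $\|x\|$. Write $K \coloneqq \mathrm{Lip}(\mathcal T)<\infty$ and $L \coloneqq \mathrm{Lip}(\mathcal T^{-1})<\infty$. First I would rewrite $\varphi$ as a product over the patches: since $\mathrm{patchNR}(x)=\frac1s\sum_{i=1}^{N_p}-\log p_{\mathcal T_\#P_Z}(P_i(x))$, one has
\[
\varphi(x)=\exp\big(-\rho\,\mathrm{patchNR}(x)\big)=\prod_{i=1}^{N_p} p_{\mathcal T_\#P_Z}\big(P_i(x)\big)^{\rho/s}.
\]

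Next I would bound each factor by the upper estimate $p_{\mathcal T_\#P_Z}(\mathrm p)\le L^sK^s\,\mathcal N(\mathrm p\mid \mathcal T(0),K^2 I)$ from Lemma~\ref{lem_decay}. Inserting the explicit Gaussian density and raising to the power $\rho/s$ turns the product into an exponential of a sum of squared distances: there is a constant $C>0$ (depending on $K,L,s,\rho,N_p$) with
\[
\varphi(x)\le C\,\exp\Big(-\frac{\rho}{2sK^2}\sum_{i=1}^{N_p}\big\|P_i(x)-\mathcal T(0)\big\|^2\Big).
\]

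The crucial step is to establish coercivity of the exponent in $\|x\|$. Applying $\|a-b\|^2\ge \tfrac12\|a\|^2-\|b\|^2$ to each patch gives $\sum_{i}\|P_i(x)-\mathcal T(0)\|^2\ge \tfrac12\sum_i\|P_i(x)\|^2-N_p\|\mathcal T(0)\|^2$. Now $\sum_{i=1}^{N_p}\|P_i(x)\|^2=\sum_{j=1}^d m_j\,x_j^2$, where $m_j$ denotes the number of patches containing pixel $j$. Here I use that the patches cover every pixel of the image, so $m_j\ge 1$ for all $j$ and hence $\sum_i\|P_i(x)\|^2\ge\|x\|^2$. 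Combining the estimates yields a bound of the form $\varphi(x)\le C'\exp\big(-\tfrac{\rho}{4sK^2}\|x\|^2\big)$ for a suitable constant $C'$.

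Finally, the right-hand side is, up to the constant $C'$, an unnormalized $d$-dimensional Gaussian and therefore integrable, so $\int_{\R^d}\varphi(x)\,\dx x<\infty$, i.e.\ $\varphi\in L^1(\R^{d})$. The main obstacle is precisely the coercivity step: the tail estimate from Lemma~\ref{lem_decay} only controls each patch separately, so integrability can fail unless the patches jointly constrain all $d$ coordinates of $x$. Thus the argument hinges on the covering property $m_j\ge 1$; the cross term $\langle P_i(x),\mathcal T(0)\rangle$ and the additive constants are harmless, being of lower order than the quadratic term in $\|x\|$.
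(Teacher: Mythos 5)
Your proof is correct and follows essentially the same route as the paper: both apply the Gaussian upper bound of Lemma~\ref{lem_decay} to each factor of $\varphi(x)=\prod_{i=1}^{N_p}p_{\mathcal T_\#P_Z}(P_i(x))^{\rho/s}$ and then invoke the fact that every pixel is covered by at least one patch to obtain coercivity. The only difference is in the last step and is cosmetic: where you derive an explicit Gaussian envelope $C'\exp\big(-\tfrac{\rho}{4sK^2}\|x\|^2\big)$ via $\|a-b\|^2\ge\tfrac12\|a\|^2-\|b\|^2$ and $\sum_i\|P_i(x)\|^2=\sum_j m_j x_j^2\ge\|x\|^2$, the paper regroups the product of one-dimensional Gaussian factors by pixel index and applies Fubini's theorem, concluding that each of the resulting one-dimensional integrals of products of powers of normal densities is finite.
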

\begin{proof}
Using Lemma~\ref{lem_decay}, there exists some $C>0$ such that it holds
\begin{align}
\begin{split}
\int_{\R^{d}}\varphi(x)\dx x&=\int_{\R^{d}} \Big(\prod_{i=1}^{N_p}p_{\mathcal T_\#P_Z}(E_i(x))\Big)^{\rho/s}\dx x\\
&\leq C\int_{\R^{d}} \Big(\prod_{i=1}^{N_p}\mathcal N(E_i(x)|\mathcal T(0),K^2 I)\Big)^{\rho/s}\dx x\\
&=C\int_{\R^{d}} \prod_{i=1}^{N_p}\prod_{j=1}^s\mathcal N((E_i(x))_j|(\mathcal T(0))_j,K^2)^{\rho/s}\dx x,
\end{split}
\end{align}
where $(E_i(x))_j$ is the $j$-th element from $E_i(x)$.
Since $(E_i(x))_j=x_{\sigma(i,j)}$ for some mapping $\sigma\colon\{1,...,N_p\}\times\{1,...,s\}\to\{1,...,d\}$ and using Fubini's theorem, this simplifies to
\begin{align}
\begin{split}
&\quad C\int_{\R^{d}} \prod_{i=1}^{N_p}\prod_{j=1}^s\mathcal N(x_{\sigma(i,j)}|(\mathcal T(0))_j,K^2)^{\rho/s}\dx x\\
&=C\int_{\R^{d}} \prod_{k=1}^{d} \prod_{(i,j)\in\sigma^{-1}(\{k\})}\mathcal N(x_{k}|(\mathcal T(0))_j,K^2)^{\rho/s}\dx x\\
&=C \prod_{k=1}^{d}\int_{\R} \prod_{(i,j)\in\sigma^{-1}(\{k\})}\mathcal N(x|(\mathcal T(0))_j,K^2)^{\rho/s}\dx x.
\end{split}
\end{align}
Here $\sigma^{-1}(\{k\})$ denotes the preimage of $\sigma$ which denotes the set of all index pairs $(i,j)$ such that $(E_i(x))_j=x_k$ for $x \in \mathbb{R}^d$. 
As each pixel in the images is covered by at least one patch, this set is non-empty.
Using the fact that products and powers of normal densities are integrable, we obtain that this expression is finite and the proof is complete.
\end{proof}
\section{Numerical Examples}\label{Sec_experiments}

In this section, we demonstrate the performance of our method. We focus on linear inverse problems, but the approach can also be extended to non-linear forward operators.
First, in Subsection~\ref{sec_ct}, we apply the patchNR to low-dose CT in a full angle and a limited angle setting and present an empirical convergence study for the optimization of the objective functional.  
Afterwards, in Subsection~\ref{sec_superres}, we consider superresolution on material data. This is a typical setting, where only little data is available and superresolution is needed to obtain sufficient detail for material research \cite{dahari_gansuperres,Jung2021,buzzard}. Lastly, we extend our findings
to zero-shot superresolution in Subsection \ref{app_zero_shot}.
To get a better impression on the very good performance of our method, we added
more numerical examples in  \ref{app:furhter_examples}\footnote{the code for all experiments is available at \url{https://github.com/FabianAltekrueger/patchNR}}. 

\paragraph{Comparison Methods}
We compare our method with established methods from the literature, in particular with
\begin{itemize}
    \item Wasserstein Patch Prior (WPP) \cite{AH2022,Hertrich21},
    \item Expected Patch Log Likelihood (EPLL) \cite{ZW2011}, 
    \item Local Adversarial Regularizer (localAR) \cite{prost2021learning},
\end{itemize}
 since these methods also work on patches and are model-based. 
 Note that we optimized the EPLL GMM prior using a gradient descent optimizer ourselves, 
 as the half quadratic splitting proposed originally by the authors of \cite{ZW2011} is much more expensive for the superresolution and CT forward operator. 
 Moreover, we include comparisons with 
 \begin{itemize}
     \item Plug-and-Play Forward Backward Splitting with DRUNet (DPIR) \cite{ZLZZ2021},
     \item Deep Image Prior in combination with a TV prior (DIP+TV) \cite{UVL2018,baguer2020computed},
     \item data-based methods as the post-processing UNet (FBP+UNet) \cite{JMFU17,RFB15a} for CT and an asymmetric CNN (ACNN) \cite{TXZLZ21} for superresolution.
 \end{itemize} 
 Details on the comparison methods are given in  \ref{app_quality_measures_comparisons}.
Note that post-processing approaches are no longer the state-of-the-art for CT reconstruction, but are still widely used and serve as comparison methods. Currently some learned iterative methods provide better results \cite{AdlOkt18}.

\paragraph{Architecture of PatchNR}
For the architecture we use there exists a universal approximation theorem \cite{TITOIS2020}. Therefore, a sufficiently large normalizing flow can approximate arbitrary probability distributions .
We use 5 GlowCoupling blocks and permutations in an alternating manner, where the coupling blocks are from the freely available FrEIA package\footnote{available at \url{https://github.com/VLL-HD/FrEIA}}. The 3-layer subnetworks are fully connected with ReLU activation functions and $512$ nodes resulting in 2908520 learnable parameters. The patchNR is trained on $6 \times 6$ patches, i.e., s = 36. For each image class, we trained the patchNR using Adam optimizer \cite{KB2015} with a learning rate of 0.0001, a batch size of 32 and for 750000 optimizer steps . Training took about 2.5 hours on a single NVIDIA GeForce RTX 2080 super GPU with 8 GB GPU memory.

\subsection{Computed Tomography} \label{sec_ct}

For computed tomography (CT) we use the LoDoPaB dataset \cite{LoDoPaB21}\footnote{available at \url{https://zenodo.org/record/3384092##.Ylglz3VBwgM}} for low-dose CT imaging. It is based on scans of the Lung Image Database Consortium and Image Database Resource Initiative \cite{Armato11} which serve as ground truth images, while the measurements are simulated. The dataset contains 35820 training images, 3522 validation images and 3553 test images. Here the ground truth images have a resolution of $362\times 362$px. The LoDoPab dataset uses a two-dimensional parallel beam geometry with equidistant detector bins. The forward operator is the discretized linear Radon transformation and the noise can be modelled using a Poisson distribution. Concretely, we consider the inverse problem
\begin{align*}
y = f(x) + \eta,~\text{where}~\eta = -f(x)  - \frac{1}{\mu} \log \Big(\frac{\tilde{N}_1}{N_0}\Big), ~ \tilde{N}_1 \sim \text{Pois} \big( N_0 \exp (-f(x) \mu) \big).
\end{align*}
Here $N_0 = 4096$ is the mean photon count per detector bin without attenuation and $\mu = 81.35858$ is a normalization constant. 
In order to compute the corresponding data-fidelity term, we see that the observation $y$ can be rewritten by 
$$
y = - \frac{1}{\mu} \log\Big(\frac{\tilde{N}_1}{N_0} \Big)
$$
and thus $\exp(-y \mu) N_0 = \tilde{N}_1 \sim \text{Pois}(\big( N_0 \exp (-f(x) \mu) \big)$. 
Since we assume that the pixels are corrupted independently and the negative log-likelihood of a Poisson distributed random variable is given by the Kullback-Leibler divergence, we have
\begin{align*}
- \log p(\exp(-y \mu) N_0 | \exp (-f(x)\mu) N_0) &= \sum_{i=1}^d -\log p(\exp(-y_i \mu) N_0 | \exp (-f(x)_i \mu) N_0) \\
&= \sum_{i=1}^d  e^{-f(x)_i \mu} N_0 + e^{-y_i \mu} N_0 \big(f(x)_i \mu - \log(N_0) \big)
\end{align*}
Thus, the concrete form of \eqref{eq:PatchNR_VariationalFormulation} we aim to minimize, is given by
\begin{align*}
\mathcal{J}(x;y) = \sum_{i=1}^d e^{-f(x)_i} N_0 + e^{-y_i} N_0 \big(f(x)_i - \log(N_0) \big) + \lambda \mathcal{R}(x).
\end{align*}

\begin{figure*}
\centering
\begin{subfigure}{.165\textwidth}
  \centering
  \includegraphics[width=\linewidth]{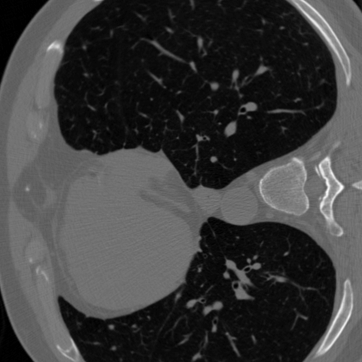}
\end{subfigure}%
\hfill
\begin{subfigure}{.165\textwidth}
  \centering
  \includegraphics[width=\linewidth]{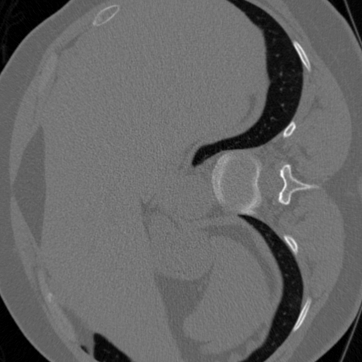}
\end{subfigure}%
\hfill
\begin{subfigure}{.165\textwidth}
  \centering
  \includegraphics[width=\linewidth]{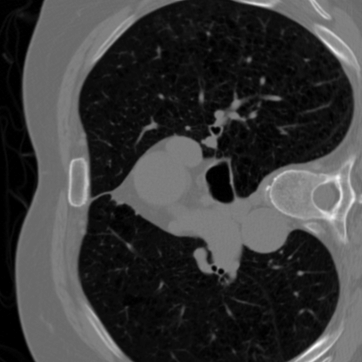}
\end{subfigure}%
\hfill
\begin{subfigure}{.165\textwidth}
  \centering
  \includegraphics[width=\linewidth]{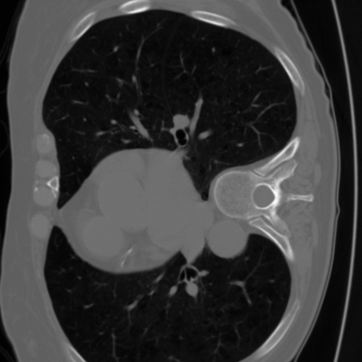}
\end{subfigure}%
\hfill
\begin{subfigure}{.165\textwidth}
  \centering
  \includegraphics[width=\linewidth]{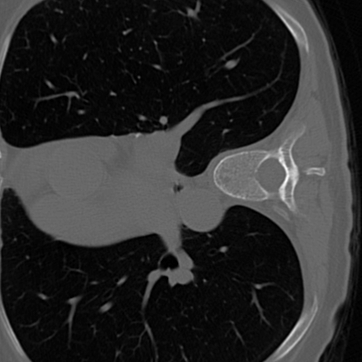}
\end{subfigure}%
\hfill
\begin{subfigure}{.165\textwidth}
  \centering
  \includegraphics[width=\linewidth]{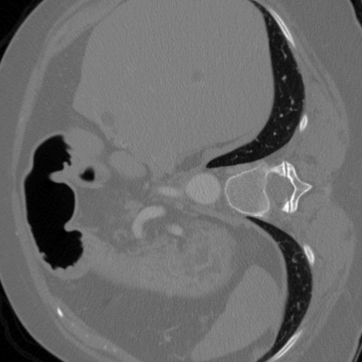}
\end{subfigure}%
\caption{Ground truth images used for training the patchNR (CT). } \label{fig_training_ground_truth}
\end{figure*}

We trained the patchNR using patches of a small set of $M=6$ handpicked CT ground truth images of size $362 \times 362$ illustrated in Figure~\ref{fig_training_ground_truth}. Once trained, the patchNR can be used both for the full angle CT and the limited angle CT setting, where we use a regularization parameter $\lambda = 700 \frac{s}{N_p}$, a random subset of $N_p = 40000$ overlapping patches in each iteration and the Adam optimizer with a learning rate of 0.005. While for full angle CT we optimized over 300 iterations, for limited angle CT 3000 iterations are used.

\paragraph{Full angle CT}

\begin{table*}[b]
\begin{center}
\scalebox{.78}{
\begin{tabular}[t]{c|ccccc|c} 
            & FBP                & DIP + TV          & EPLL        & localAR    & patchNR              & FBP+UNet \\
            &                    &                   &             &            &                      & (data-based) \\
\hline
PSNR        & 30.37 $\pm$ 2.95   & 34.45 $\pm$ 4.20  & 34.89 $\pm$ 4.41   & 33.64 $\pm$  3.74 &  \textbf{35.19} $\pm$ 4.52  & 35.48 $\pm$ 4.52 \\ 
SSIM        & 0.739 $\pm$ 0.141  & 0.821 $\pm$ 0.147 & 0.821 $\pm$ 0.154 & 0.807 $\pm$ 0.145 & \textbf{0.829} $\pm$ 0.152  & 0.837 $\pm$ 0.143  \\
\hline
Runtime     & 0.03s              & 1514.33s          & 36.65s      & 30.03s     &  48.39s              & 0.46s 
\end{tabular}}
\caption{Full angle CT. Averaged quality measures and standard deviations of the reconstructions. } 
\label{Table_averagederrorMeasures_fullangleCT}
\end{center}
\end{table*}

\begin{figure*}
\centering
\begin{subfigure}[t]{.14\textwidth}
  \includegraphics[width=\linewidth]{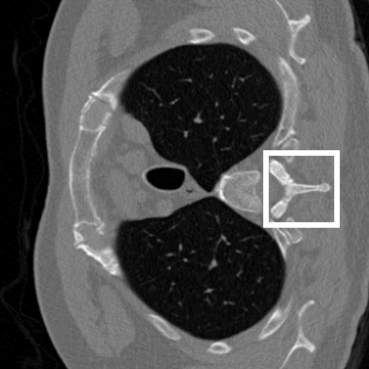}
\end{subfigure}%
\hfill
\begin{subfigure}[t]{.14\textwidth}
  \includegraphics[width=\linewidth]{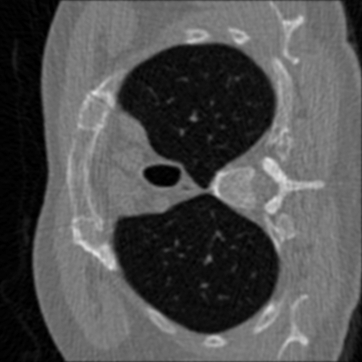}
\end{subfigure}%
\hfill
\begin{subfigure}[t]{.14\textwidth}
  \includegraphics[width=\linewidth]{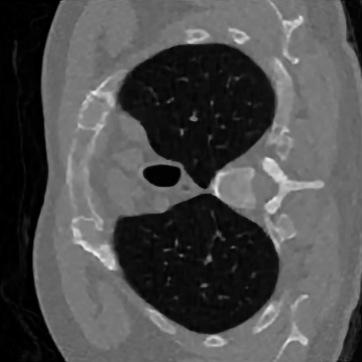}
\end{subfigure}%
\hfill
\begin{subfigure}[t]{.14\textwidth}
  \includegraphics[width=\linewidth]{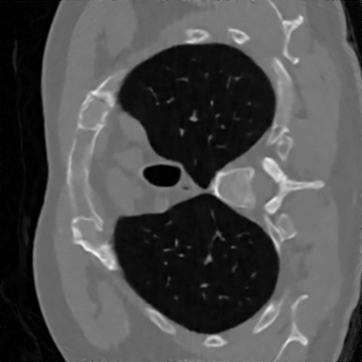}
\end{subfigure}%
\hfill
\begin{subfigure}[t]{.14\textwidth}
  \includegraphics[width=\linewidth]{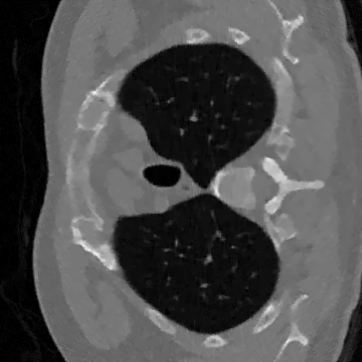}
\end{subfigure}%
\hfill
\begin{subfigure}[t]{.14\textwidth}
  \includegraphics[width=\linewidth]{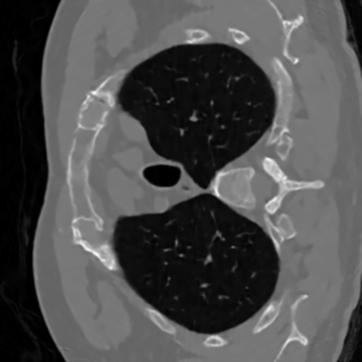}
\end{subfigure}%
\hspace{0.03cm}
\begin{subfigure}[t]{.14\textwidth}
  \includegraphics[width=\linewidth]{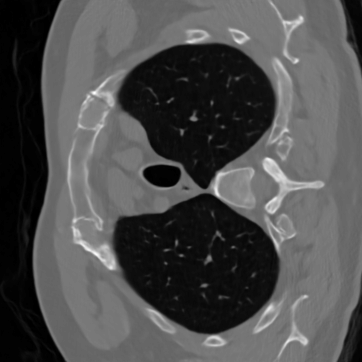}
\end{subfigure}%

\begin{subfigure}[t]{.14\textwidth}
  \includegraphics[width=\linewidth]{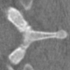}
  \caption*{Ground truth}
\end{subfigure}%
\hfill
\begin{subfigure}[t]{.14\textwidth}
  \includegraphics[width=\linewidth]{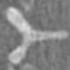}
  \caption*{FBP}  
\end{subfigure}%
\hfill
\begin{subfigure}[t]{.14\textwidth}
  \includegraphics[width=\linewidth]{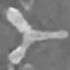}
  \caption*{DIP+TV}  
\end{subfigure}%
\hfill
\begin{subfigure}[t]{.14\textwidth}
  \includegraphics[width=\linewidth]{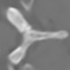}
  \caption*{EPLL}  
\end{subfigure}%
\hfill
\begin{subfigure}[t]{.14\textwidth}
  \includegraphics[width=\linewidth]{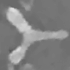}
  \caption*{localAR}  
\end{subfigure}%
\hfill
\begin{subfigure}[t]{.14\textwidth}
  \includegraphics[width=\linewidth]{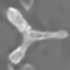}
  \caption*{patchNR}  
\end{subfigure}%
\hspace{0.03cm}
\begin{subfigure}[t]{.14\textwidth}
  \includegraphics[width=\linewidth]{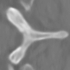}
  \captionsetup{justification=centering}
  \caption*{FBP+UNet \\(data-based)}  
\end{subfigure}%
\caption{Full angle  CT  using different methods. The zoomed-in part is marked with a white box in the ground truth image.
Our approach gives significantly better results than the model-based comparison methods.
\textit{Top}: full image. \textit{Bottom}: zoomed-in part.} \label{Fig_CT_img}
\end{figure*}

For full angle CT we consider 1000 equidistant angles between 0 and $\pi$. 
In Figure \ref{Fig_CT_img} we compare different methods for full angle low-dose CT imaging. 
Here the patchNR yields better results than DIP+TV and localAR, in particular the edges are sharper and more realistic in the reconstruction of patchNR. Visually, there are only small differences between patchNR and FBP+UNet observable, although FBP+UNet is a data-based method trained on 35820 image pairs, while we only used 6 ground truth images for training the patchNR. 
The quality measures averaged over the first 100 test images of the LoDoPaB dataset in Table \ref{Table_averagederrorMeasures_fullangleCT} confirm these observations. 
PSNR and SSIM were evaluated on an adaptive data range. Note that the diversity of the test set causes relatively high standard deviations.

\paragraph{Limited angle CT}

\begin{table*}[b]
\begin{center}
\scalebox{.78}{
\begin{tabular}[t]{c|ccccc|c} 
         & FBP     & DIP + TV & EPLL   & localAR & patchNR & FBP+UNet \\
         &         &          &        &         &         & (data-based) \\
\hline
PSNR     & 21.96 $\pm$ 2.25   & 32.57 $\pm$ 3.25  & 32.78 $\pm$ 3.46 & 31.06 $\pm$ 2.95 &  \textbf{33.20} $\pm$ 3.55  & 33.75 $\pm$ 3.58  \\ 
SSIM     & 0.531 $\pm$ 0.097 & 0.803 $\pm$ 0.146  & 0.801 $\pm$ 0.151 & 0.779 $\pm$ 0.142 & \textbf{0.811} $\pm$ 0.151 & 0.820 $\pm$ 0.140\\\hline
Runtime  & 0.02s  & 1770.89s     &  127.21s   &  53.47s    & 485.93s   & 0.53s
\end{tabular}}
\caption{Limited angle CT. Averaged quality measures and standard deviations of the reconstructions. }  
\label{Table_averagederrorMeasures_limitedangleCT}
\end{center}
\end{table*}

\begin{figure*}
\centering
\begin{subfigure}[t]{.14\textwidth}
  \includegraphics[width=\linewidth]{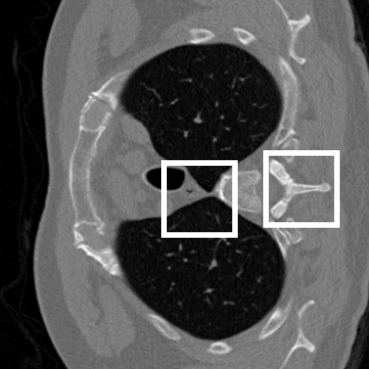}
\end{subfigure}%
\hfill
\begin{subfigure}[t]{.14\textwidth}
  \includegraphics[width=\linewidth]{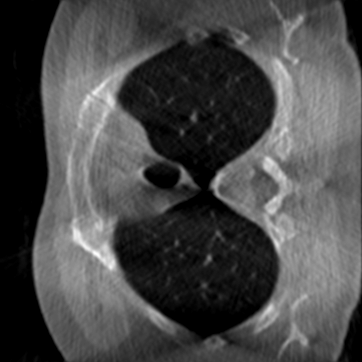}
\end{subfigure}%
\hfill
\begin{subfigure}[t]{.14\textwidth}
  \includegraphics[width=\linewidth]{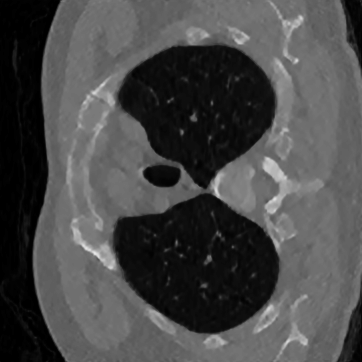}
\end{subfigure}%
\hfill
\begin{subfigure}[t]{.14\textwidth}
  \includegraphics[width=\linewidth]{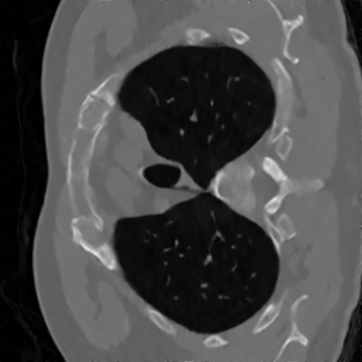}
\end{subfigure}%
\hfill
\begin{subfigure}[t]{.14\textwidth}
  \includegraphics[width=\linewidth]{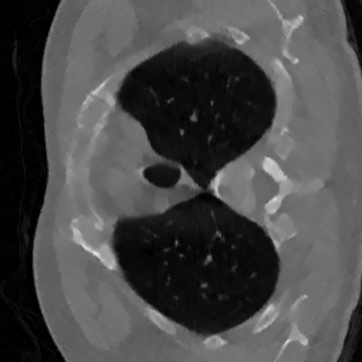}
\end{subfigure}%
\hfill
\begin{subfigure}[t]{.14\textwidth}
  \includegraphics[width=\linewidth]{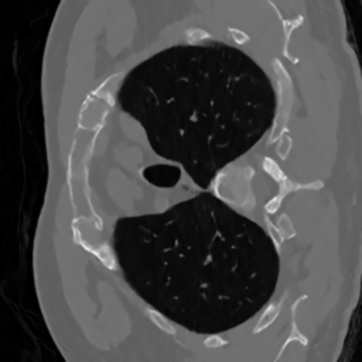}
\end{subfigure}%
\hspace{0.03cm}
\begin{subfigure}[t]{.14\textwidth}
  \includegraphics[width=\linewidth]{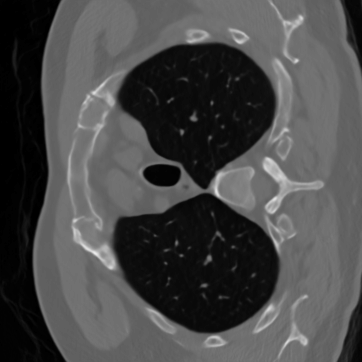}
\end{subfigure}%

\begin{subfigure}[t]{.14\textwidth}
  \includegraphics[width=\linewidth]{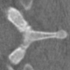}
\end{subfigure}%
\hfill
\begin{subfigure}[t]{.14\textwidth}
  \includegraphics[width=\linewidth]{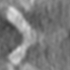}
\end{subfigure}%
\hfill
\begin{subfigure}[t]{.14\textwidth}
  \includegraphics[width=\linewidth]{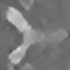}
\end{subfigure}%
\hfill
\begin{subfigure}[t]{.14\textwidth}
  \includegraphics[width=\linewidth]{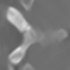}
\end{subfigure}%
\hfill
\begin{subfigure}[t]{.14\textwidth}
  \includegraphics[width=\linewidth]{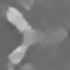}
\end{subfigure}%
\hfill
\begin{subfigure}[t]{.14\textwidth}
  \includegraphics[width=\linewidth]{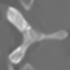}
\end{subfigure}%
\hspace{0.03cm}
\begin{subfigure}[t]{.14\textwidth}
  \includegraphics[width=\linewidth]{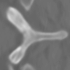}
\end{subfigure}%

\begin{subfigure}[t]{.14\textwidth}
  \includegraphics[width=\linewidth]{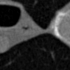}
  \caption*{Ground truth}
\end{subfigure}%
\hfill
\begin{subfigure}[t]{.14\textwidth}
  \includegraphics[width=\linewidth]{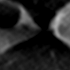}
  \caption*{FBP}
\end{subfigure}%
\hfill
\begin{subfigure}[t]{.14\textwidth}
  \includegraphics[width=\linewidth]{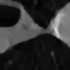}
  \caption*{DIP+TV}
\end{subfigure}%
\hfill
\begin{subfigure}[t]{.14\textwidth}
  \includegraphics[width=\linewidth]{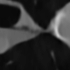}
  \caption*{EPLL}
\end{subfigure}%
\hfill
\begin{subfigure}[t]{.14\textwidth}
  \includegraphics[width=\linewidth]{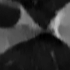}
  \caption*{localAR}
\end{subfigure}%
\hfill
\begin{subfigure}[t]{.14\textwidth}
  \includegraphics[width=\linewidth]{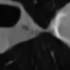}
  \caption*{patchNR}
\end{subfigure}%
\hspace{0.03cm}
\begin{subfigure}[t]{.14\textwidth}
  \includegraphics[width=\linewidth]{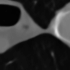}
  \captionsetup{justification=centering}
  \caption*{{FBP+UNet \\(data-based)}}
\end{subfigure}%
\caption{Limited angle reconstruction of the ground truth CT image using different methods. The zoomed-in part is marked with a white box in the ground truth image.
The improvement of the image quality by our method is even better visible than in Figure \ref{Fig_CT_img}.
\textit{Top}: full image. \textit{Middle and bottom}: zoomed-in parts.} \label{Fig_CT_limited}
\end{figure*}

Now we consider the limited angle CT reconstruction problem, i.e., instead of considering equidistant angles between $0$ and $\pi$ we only have a subset of angles. In our experiment we cut off the first and last 100 angles, i.e., we cut off $36^{\circ}$ out of $180^{\circ}$. This leads to a much worse FBP reconstruction. In Figure \ref{Fig_CT_limited}, we compare the different reconstruction methods for the limited angle problem. Although the FBP shows a very bad reconstruction in the $36^{\circ}$ part, where the angles are cut off, the patchNR can reconstruct these details well and in a realistic manner. In particular, the edges of patchNR reconstruction are preserved, while for the other methods these have a pronounced blur, see Table \ref{Table_averagederrorMeasures_limitedangleCT} for an average of the quality measures over the first 100 test images.

\paragraph{Empirical convergence analysis for full angle CT}
To reconstruct the ground truth image from given measurements $y$, we minimize the functional $\mathcal J(x;y)$ in  \eqref{eq:PatchNR_VariationalFormulation} w.r.t. $x$ using the Adam optimizer. The resulting optimization problem is non-convex and the final minimizer could depend on the initialization. In our experiments, it has proven useful to start the optimization with a rough reconstruction. For both full angle CT and limited angle CT we choose a FBP reconstruction. In order to empirically test the convergence of $\mathcal J(x;y)$ we evaluated the PSNR during the optimization process. In Figure \ref{fig:PSNRperIteration} we visualize the PSNR per iteration for the first two images of the test dataset and show reconstructions at iteration $0$, $150$ and $300$. It can be seen that the PSNR is steadily rising during the optimization. Arguably for the left image in Figure \ref{fig:PSNRperIteration} we could have chosen even more iterations.

\begin{figure*}[t]
\begin{subfigure}{0.49\textwidth}
      \centering
    \includegraphics[width=\linewidth]{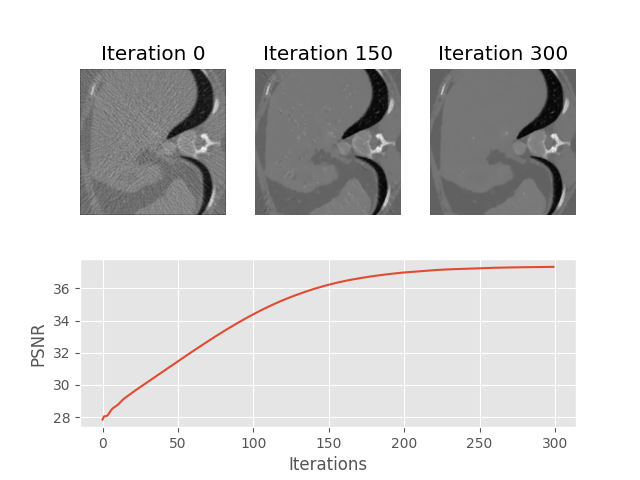}
\end{subfigure}
\begin{subfigure}{0.49\textwidth}
      \centering
    \includegraphics[width=\linewidth]{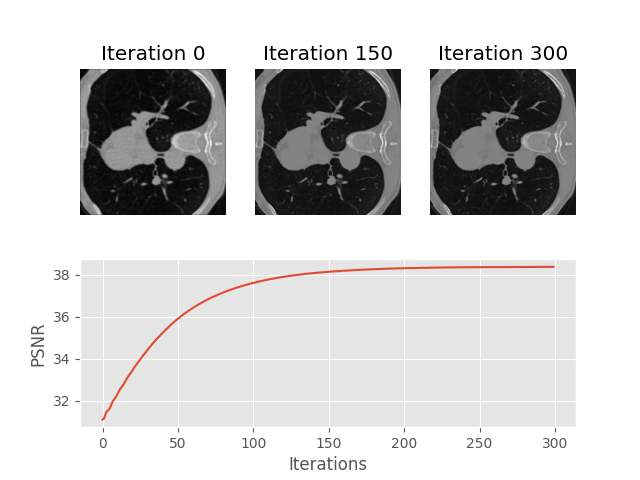}
\end{subfigure}
    \caption{The PSNR for the first two test images per iteration of the optimization process.}
    \label{fig:PSNRperIteration}
\end{figure*}

\paragraph{Ablation studies for full angle CT}
First, we trained the patchNR for patch sizes $4 \times 4, 6 \times 6, 8 \times 8$ and $10 \times 10$. In Table \ref{Table_Ablation_PatchSize} we tested the sensitivity w.r.t. the patch size. For all patch sizes we extracted 40000 patches per iteration and used the optimal regularization parameter $\lambda$ (which is set to 1600, 700, 400 and 250 for the patch sizes $4 \times 4, 6 \times 6, 8 \times 8$ and $10 \times 10$, respectively). We can observe that a larger patch size lead to slightly more blurry images.
However the PSNR seems to change very little within different patch sizes, therefore we observe that our method is quite robust against the choice of the patch size. 

\begin{table*}[b]
\begin{center}
\scalebox{.85}{
\begin{tabular}[t]{c|ccccc|c} 
Patch size  & $s = 4\times4$           & s = $6\times6$           &  $s = 8\times8$         & $s = 10\times10$ \\
\hline
PSNR        & 35.00 $\pm$ 4.45  & 35.19 $\pm$ 4.52  & 35.20 $\pm$ 4.58  &  35.17 $\pm$ 4.56 \\ 
SSIM        & 0.825 $\pm$ 0.153 & 0.829 $\pm$ 0.152 & 0.827 $\pm$ 0.154 & 0.828 $\pm$ 0.154  \\
\end{tabular}}
\caption{Comparison of full angle CT results for different patch sizes. Averaged quality measures and standard deviations of the reconstructions. }  
\label{Table_Ablation_PatchSize}
\end{center}
\end{table*}

Next, in Table \ref{Table_Ablation_NumberOfPatches} we show reconstruction results
for different numbers of patches used per iteration. Here we consider the patch size $6 \times 6$ and use the regularization parameter $\lambda = 700$. We see that the PSNR slightly increases with number of patches, while the SSIM seems to get worse at some point. 

\begin{table*}[t]
\begin{center}
\scalebox{.74}{
\begin{tabular}[t]{c|cccccc} 
Extracted patches per iteration
            & 20000           & 30000           &  40000          & 50000           & 60000 \\
\hline
PSNR        & 35.04 $\pm$ 4.39  & 35.16 $\pm$ 4.49  & 35.19 $\pm$ 4.52  & 35.21 $\pm$ 4.55  & 35.21 $\pm$ 4.56 \\ 
SSIM        & 0.829 $\pm$ 0.148 & 0.829 $\pm$ 0.151 & 0.829 $\pm$ 0.152 & 0.828 $\pm$ 0.153 & 0.828 $\pm$ 0.154 \\
\end{tabular}}
\caption{Comparison of full angle CT results for varying number of patches per iteration. Averaged quality measures and standard deviations of the reconstructions.}  
\label{Table_Ablation_NumberOfPatches}
\end{center}
\end{table*}

\begin{table*}[b]
\begin{center}
\scalebox{.85}{
\begin{tabular}[t]{c|cccccc} 
Number of training images    
            & 1               & 6            &  50 \\
\hline
PSNR        & 33.68 $\pm$ 3.57  & 35.19 $\pm$ 4.52  & 35.24 $\pm$ 4.60  \\ 
SSIM        & 0.802 $\pm$ 0.127 & 0.829 $\pm$ 0.152 & 0.827 $\pm$ 0.156 \\
\end{tabular}}
\caption{Comparison of full angle CT results for different sets of 6 ground truth images.
Averaged quality measures and standard deviations of the reconstructions. } 
\label{Table_Ablation_TrainingSetSize}
\end{center}
\end{table*}

Finally, we examine the choice of the training set. 
In Table \ref{Table_Ablation_TrainingSetSize} we evaluate the patchNR with patch size $6 \times 6$ and $\lambda = 700$, when trained on $1$, $6$ or $50$ images. Obviously, for the CT dataset one training image is not enough to learn the patch distribution. This can be explained by the diversity of the CT dataset, see e.g. Figure \ref{fig_training_ground_truth}.

In Table \ref{Table_Ablation_ChoiceOfTrainingSet} we varied the training set of $6$ images and evaluated the model on $3$ different choices. In total we trained the patchNR 15 times on 6 randomly chosen training images of the LoDoPaB dataset and then evaluated on the test set. Again we consider the patch size $6 \times 6$ and a regularization parameter $\lambda = 700$
and used 40000 extracted patches per iteration. Note that the bad case in Table \ref{Table_Ablation_ChoiceOfTrainingSet} comes from a very noisy ground truth training set of the patchNR.

Overall, we see that the method is quite robust towards certain hyperparameter changes, and it can even be a matter of taste which ones to prefer as the image metrics do not always agree.

\begin{table*}[t]
\begin{center}
\scalebox{.85}{
\begin{tabular}[t]{c|ccc|cc} 
   
            & worst run         & our run           & best run           & mean $\pm$ standard deviation \\
\hline
PSNR        & 34.90 $\pm$ 4.39  & 35.19 $\pm$ 4.52  & 35.26 $\pm$ 4.60   & 35.13 $\pm$ 0.09 \\ 

SSIM        & 0.825 $\pm$ 0.153 & 0.829 $\pm$ 0.152 & 0.828 $\pm$ 0.154  & 0.827 $\pm$ 0.001 \\
\end{tabular}}
\caption{ 40000 extracted patches per iteration. Patch size $s=6\times6$. 
Regularization parameter $\lambda = 700$. 
6 random training images. 
Averaged quality measures and standard deviations of the reconstructions. }  
\label{Table_Ablation_ChoiceOfTrainingSet}
\end{center}
\end{table*}

\subsection{Superresolution} \label{sec_superres}

We choose the forward operator $f$ as a convolution with a $16 \times 16$ 
Gaussian blur kernel with standard deviation $2$ and subsampling with stride $4$. To keep the dimensions consistent, we use zero-padding. 
For the experiments, we extract a dataset of 2D slices of size $600 \times 600$ from a 
3D material image of size $2560\times 2560\times 2120$,
which has been acquired by synchrotron micro-computed tomography. 
We consider a composite ("SiC Diamonds") obtained by microwave sintering of silicon and diamonds, see \cite{vaucher2007line}. 
We generate low resolution images by using the predefined forward operator and adding additive Gaussian noise with standard deviation $\sigma = 0.01$, i.e. we have
\begin{align} 
y = f(x) + \eta,~\text{where}~\eta \sim \mathcal{N}(0,\sigma^2 I). 
\end{align}
Consequently, from a Bayesian viewpoint the negative log likelihood $- \log(p_{Y | X=x}(y))$ can be, up to a constant, rewritten as
\begin{align*}
- \log(p_{Y | X=x}(y)) \varpropto - \log \big( \exp ( - \Vert f(x) - y \Vert^2 / (2 \sigma^2) \big) = \frac{1}{2 \sigma^2} \Vert f(x) - y \Vert^2.
\end{align*}
Thus the concrete form of \eqref{eq:PatchNR_VariationalFormulation} is given by
\begin{align} \label{equ_gaussianmodel}
\mathcal{J}(x) = \frac{1}{2 \sigma^2} \Vert f(x) - y \Vert^2 + \rho \mathcal{R}(x) = \Vert f(x) - y \Vert^2 + \lambda \mathcal{R}(x),
\end{align}
with $\lambda = 2\rho\sigma^2$.

\begin{table*}[b]
\begin{center}
\scalebox{.68}{
\begin{tabular}[t]{c|cccccc|c} 
            & bicubic            & DPIR              & DIP+TV            & EPLL              &  WPP               & patchNR                 & ACNN      \\
            & (not shown)        & (not shown)       &                   &                   &                    &                         & (data-based) \\
\hline
PSNR        &  25.63 $\pm$ 0.56  & 27.78 $\pm$ 0.53  & 27.99 $\pm$ 0.54  & 28.11 $\pm$ 0.55  & 27.80 $\pm$ 0.37   & \textbf{28.53} $\pm$ 0.49 & 28.89 $\pm$ 0.53 \\ 
SSIM        &  0.699 $\pm$ 0.012 & 0.770 $\pm$ 0.011 & 0.764 $\pm$ 0.007 & 0.779 $\pm$ 0.010 & 0.749 $\pm$ 0.011  & \textbf{0.780} $\pm$ 0.008 & 0.804 $\pm$ 0.010  \\\hline
Runtime     & 0.0002s          & 56.62s             & 234.00s            & 60.28s            & 387.28s            &  150.79s                & 0.03s  
\end{tabular}}
\caption{Superresolution. Averaged quality measures and standard deviations of the high resolution reconstructions. } 
\label{Table_averagederrorMeasures}
\end{center}
\end{table*}

\begin{figure*}
\centering
\begin{subfigure}[t]{.14\textwidth}
  \includegraphics[width=\linewidth]{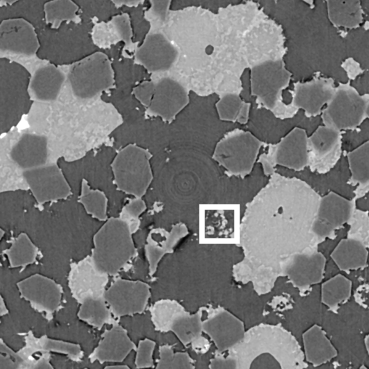}
\end{subfigure}%
\hfill
\begin{subfigure}[t]{.14\textwidth}
  \includegraphics[width=\linewidth]{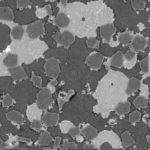}
\end{subfigure}%
\hfill
\begin{subfigure}[t]{.14\textwidth}
  \includegraphics[width=\linewidth]{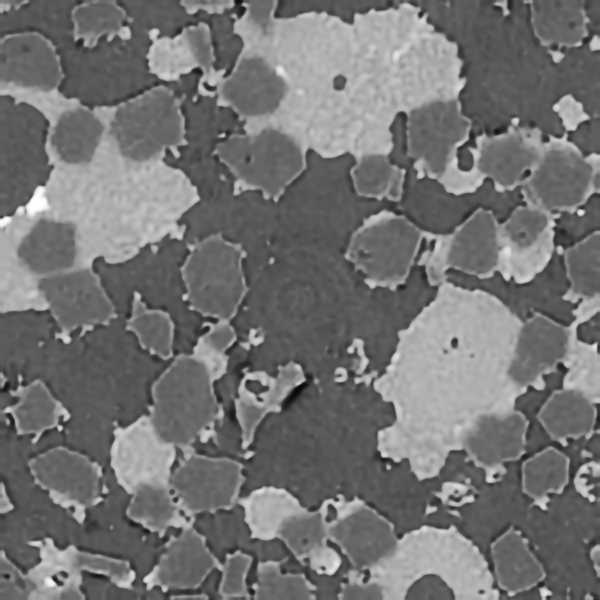}
\end{subfigure}%
\hfill
\begin{subfigure}[t]{.14\textwidth}
  \includegraphics[width=\linewidth]{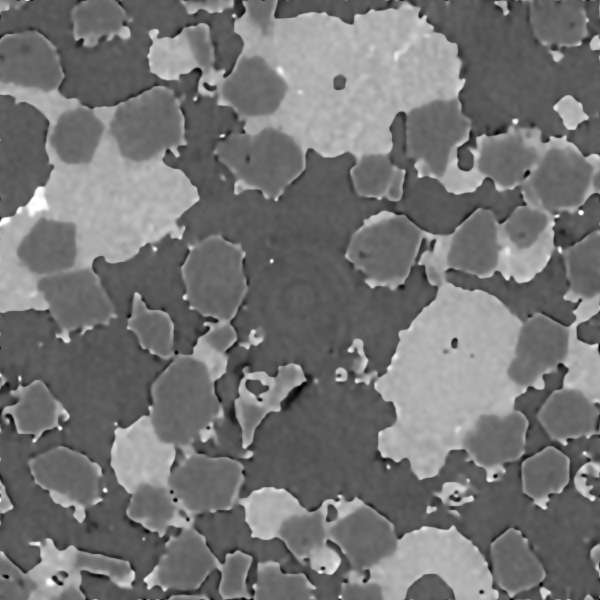}
\end{subfigure}%
\hfill
\begin{subfigure}[t]{.14\textwidth}
  \includegraphics[width=\linewidth]{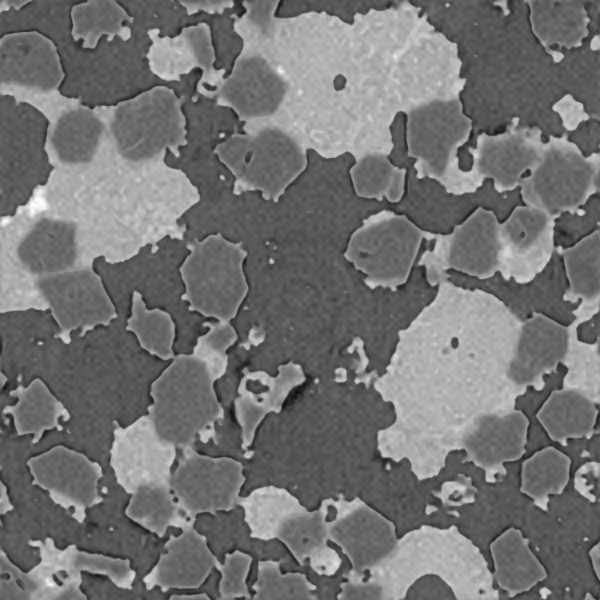}
\end{subfigure}%
\hfill
\begin{subfigure}[t]{.14\textwidth}
  \includegraphics[width=\linewidth]{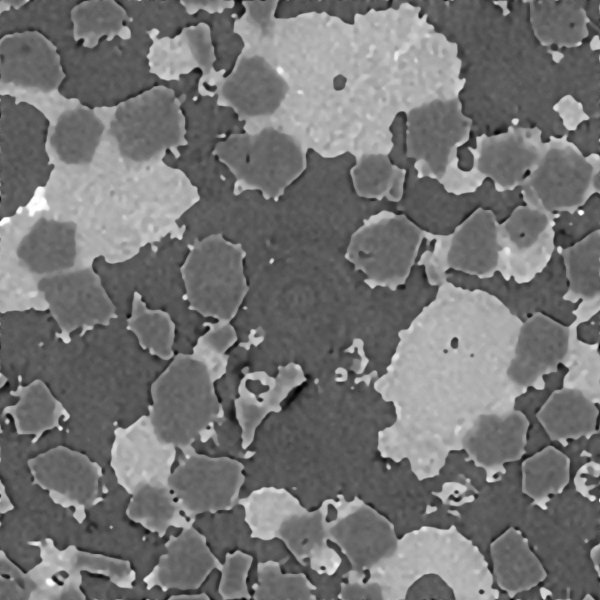}
\end{subfigure}%
\hspace{0.03cm}
\begin{subfigure}[t]{.14\textwidth}
  \includegraphics[width=\linewidth]{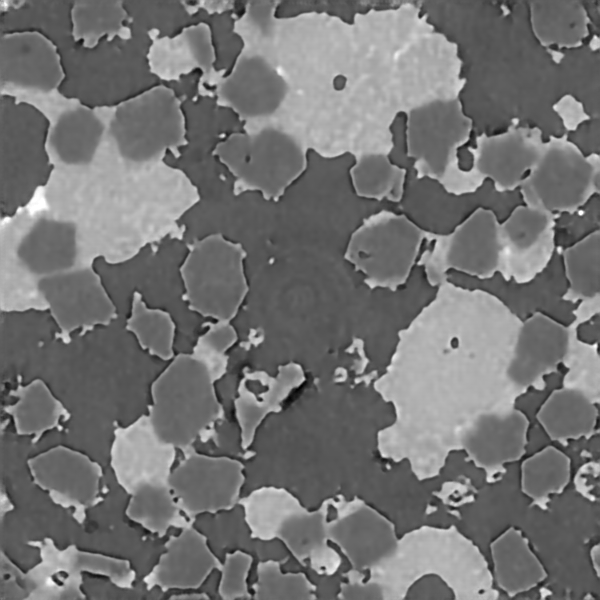}
\end{subfigure}%

\begin{subfigure}[t]{.14\textwidth}
  \includegraphics[width=\linewidth]{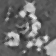}
  \caption*{HR}
\end{subfigure}%
\hfill
\begin{subfigure}[t]{.14\textwidth}
  \includegraphics[width=\linewidth]{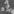}
  \caption*{LR}
\end{subfigure}%
\hfill
\begin{subfigure}[t]{.14\textwidth}
  \includegraphics[width=\linewidth]{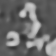}
  \caption*{DIP + TV}
\end{subfigure}%
\hfill
\begin{subfigure}[t]{.14\textwidth}
  \includegraphics[width=\linewidth]{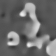}
  \caption*{EPLL}
\end{subfigure}%
\hfill
\begin{subfigure}[t]{.14\textwidth}
  \includegraphics[width=\linewidth]{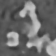}
  \caption*{WPP}
\end{subfigure}%
\hfill
\begin{subfigure}[t]{.14\textwidth}
  \includegraphics[width=\linewidth]{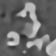}
  \caption*{patchNR}
\end{subfigure}%
\hspace{0.03cm}
\begin{subfigure}[t]{.14\textwidth}
  \includegraphics[width=\linewidth]{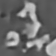}
  \captionsetup{justification=centering}
  \caption*{ACNN \\(data-based)}
\end{subfigure}%
\caption{Comparison of different methods for superresolution. 
The zoomed-in part is marked with a white box in the ground truth image.
Having in particular a look to disconnected parts in the lower right corner the
improvement of the reconstruction by our method becomes evident.
\textit{Top}: full image. \textit{Bottom}: zoomed-in part.
} \label{Fig_comp_SR}
\end{figure*}

The patchNR is trained on patches of only $M=1$ example image of size $600 \times 600$. For reconstruction, we use the regularization parameter $\lambda = 0.15 \frac{s}{N_p}$, the random subset of overlapping patches is of size $N_p = 130000$ in each iteration and we optimize over 300 iterations using the Adam optimizer with a learning rate of 0.03.
Since we do not want to consider boundary effects, we cut off a boundary of 40 pixels before evaluating the quality measures.

In Figure \ref{Fig_comp_SR} we compare different methods for reconstructing the high resolution image $x$ from the given low-resolution observation $y$. As initialization  we choose the bicubic interpolation. The  patchNR yields very clear and better images than the other model-based methods, visually and in terms of the quality metrics; see Table \ref{Table_averagederrorMeasures} for an average over 100 test images. In particular, the reconstruction of patchNR is less blurry than the DIP+TV and WPP reconstruction, specifically in regions between  edges.

\subsection{Zero-shot Superresolution with PatchNRs}\label{app_zero_shot}

In the case of superresolution, we can train the patchNR even without any training image. To this end, we combine some concepts of zero-shot superresolution by internal learning \cite{GBI2009,SCI2018} with patchNRs.
In these approaches, the main assumption is that the patch distribution of natural images is self-similar across the scales. 
Consequently, the patch distributions of the same image at different resolutions should be similar.
Motivated by this observation, we train the patchNR on the low-resolution observations such that we do not longer require any sample from the high-resolution ground truth.

In the following, we consider the case, where we have given one single low-resolution observation at training time and additionally the forward operator at test time. In particular, we do not require access to any high-resolution ground truth image and therefore the method is fully unsupervised. 
We train the patchNR on the patches from the low-resolution observation , where the training data is enriched by rotating and mirror reflecting of the patches such that we get 8 times more training patches. Note that in this setting the patchNR needs to be retrained for every new observation.

\begin{table*}[t]
\begin{center}
\scalebox{.8}{
\begin{tabular}[t]{c|cccccc} 
            & $L^2$-TV          & DIP+TV            & ZSSR               & DualSR                     &      patchNR \\
\hline 
PSNR        & 28.35 $\pm$ 3.55  & 28.44 $\pm$ 3.69  & 28.83 $\pm$ 3.57   &  28.64 $\pm$ 3.47          & \textbf{29.08} $\pm$ 3.58  \\
SSIM        & 0.820 $\pm$ 0.072 & 0.821 $\pm$ 0.087 & 0.834 $\pm$ 0.066   &  0.829 $\pm$ 0.061         & \textbf{0.846} $\pm$ 0.061 \\
\hline
Runtime     &  13.12s           & 171.51s           & 56.64s             &      53.47s                & 132.36s
\end{tabular}}
\caption{Zero-shot superresolution. Averaged quality measures and standard deviations of the reconstructions of BSD68 dataset.}  
\label{Table_averagederrorMeasures_zero_shot}
\end{center}
\end{table*}

\begin{figure*}[b!]
\centering
\begin{subfigure}[t]{.14\textwidth}
  \includegraphics[width=\linewidth]{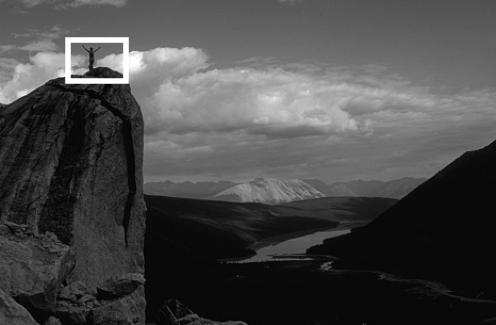}
\end{subfigure}%
\hfill
\begin{subfigure}[t]{.14\textwidth}
  \includegraphics[width=\linewidth]{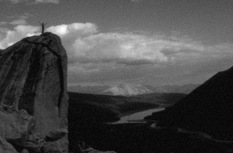}
\end{subfigure}%
\hfill
\begin{subfigure}[t]{.14\textwidth}
  \includegraphics[width=\linewidth]{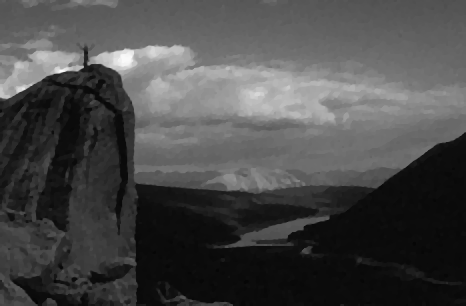}
\end{subfigure}%
\hfill
\begin{subfigure}[t]{.14\textwidth}
  \includegraphics[width=\linewidth]{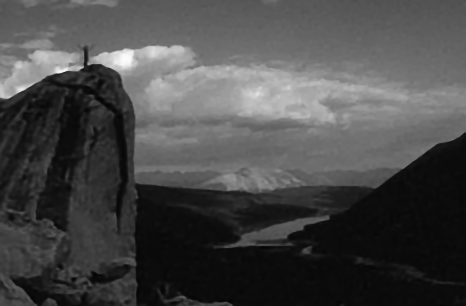}
\end{subfigure}%
\hfill
\begin{subfigure}[t]{.14\textwidth}
  \includegraphics[width=\linewidth]{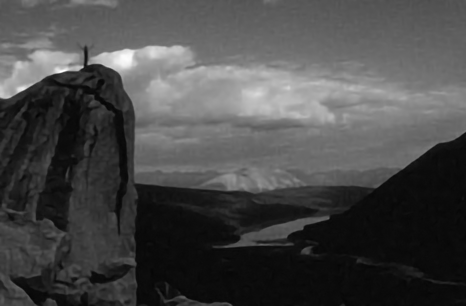}
\end{subfigure}%
\hfill
\begin{subfigure}[t]{.14\textwidth}
  \includegraphics[width=\linewidth]{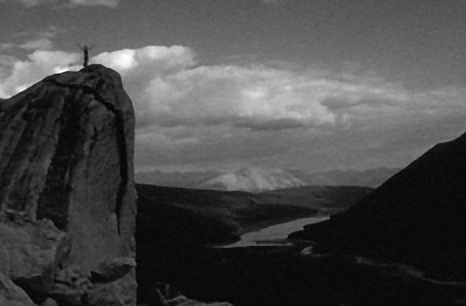}
\end{subfigure}%
\hfill
\begin{subfigure}[t]{.14\textwidth}
  \includegraphics[width=\linewidth]{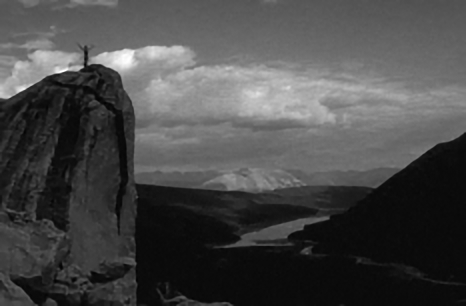}
\end{subfigure}%

\begin{subfigure}[t]{.14\textwidth}
  \includegraphics[width=\linewidth]{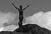}
\end{subfigure}%
\hfill
\begin{subfigure}[t]{.14\textwidth}
  \includegraphics[width=\linewidth]{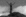}
\end{subfigure}%
\hfill
\begin{subfigure}[t]{.14\textwidth}
  \includegraphics[width=\linewidth]{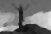}
\end{subfigure}%
\hfill
\begin{subfigure}[t]{.14\textwidth}
  \includegraphics[width=\linewidth]{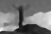}
\end{subfigure}%
\hfill
\begin{subfigure}[t]{.14\textwidth}
  \includegraphics[width=\linewidth]{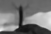}
\end{subfigure}%
\hfill
\begin{subfigure}[t]{.14\textwidth}
  \includegraphics[width=\linewidth]{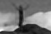} 
\end{subfigure}%
\hfill
\begin{subfigure}[t]{.14\textwidth}
  \includegraphics[width=\linewidth]{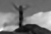}
\end{subfigure}%

\centering
\begin{subfigure}[t]{.14\textwidth}
  \includegraphics[width=\linewidth]{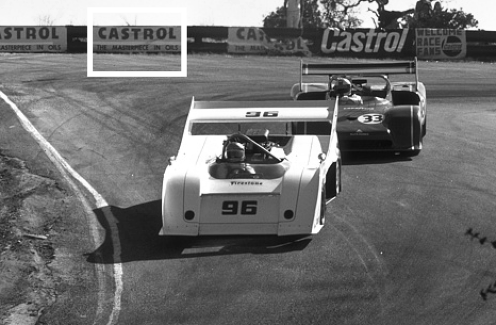}
\end{subfigure}%
\hfill
\begin{subfigure}[t]{.14\textwidth}
  \includegraphics[width=\linewidth]{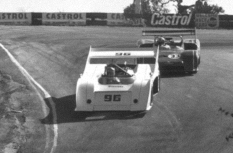}
\end{subfigure}%
\hfill
\begin{subfigure}[t]{.14\textwidth}
  \includegraphics[width=\linewidth]{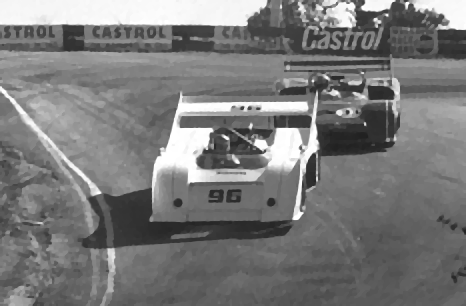}
\end{subfigure}%
\hfill
\begin{subfigure}[t]{.14\textwidth}
  \includegraphics[width=\linewidth]{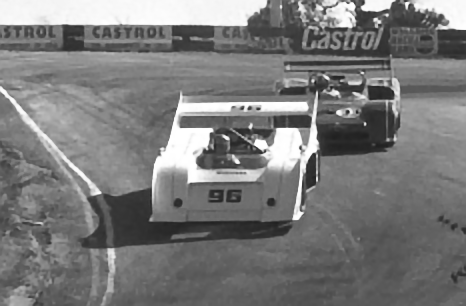}
\end{subfigure}%
\hfill
\begin{subfigure}[t]{.14\textwidth}
  \includegraphics[width=\linewidth]{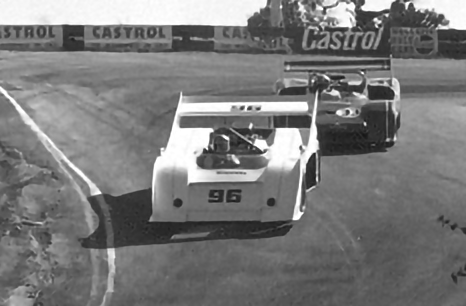}
\end{subfigure}%
\hfill
\begin{subfigure}[t]{.14\textwidth}
  \includegraphics[width=\linewidth]{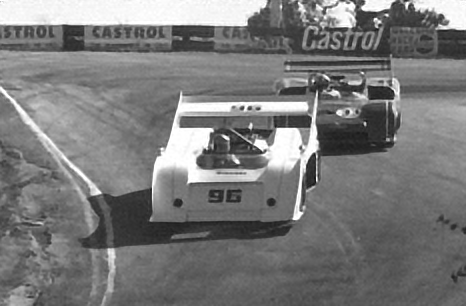}
\end{subfigure}%
\hfill
\begin{subfigure}[t]{.14\textwidth}
  \includegraphics[width=\linewidth]{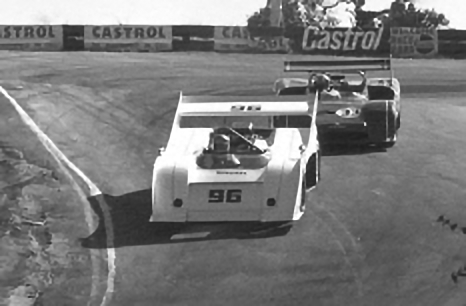}
\end{subfigure}%

\begin{subfigure}[t]{.14\textwidth}
  \includegraphics[width=\linewidth]{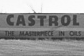}
\end{subfigure}%
\hfill
\begin{subfigure}[t]{.14\textwidth}
  \includegraphics[width=\linewidth]{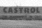}
\end{subfigure}%
\hfill
\begin{subfigure}[t]{.14\textwidth}
  \includegraphics[width=\linewidth]{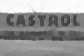}
\end{subfigure}%
\hfill
\begin{subfigure}[t]{.14\textwidth}
  \includegraphics[width=\linewidth]{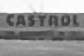}
\end{subfigure}%
\hfill
\begin{subfigure}[t]{.14\textwidth}
  \includegraphics[width=\linewidth]{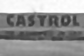}
\end{subfigure}%
\hfill
\begin{subfigure}[t]{.14\textwidth}
  \includegraphics[width=\linewidth]{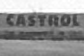}
\end{subfigure}%
\hfill
\begin{subfigure}[t]{.14\textwidth}
  \includegraphics[width=\linewidth]{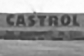}
\end{subfigure}%

\centering
\begin{subfigure}[t]{.14\textwidth}
  \includegraphics[width=\linewidth]{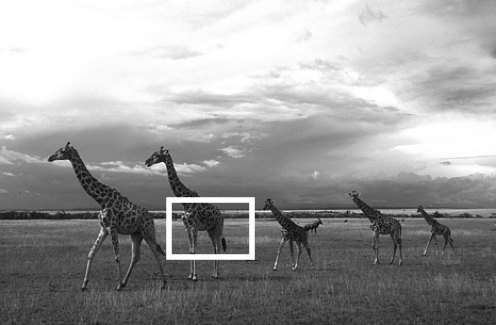}
\end{subfigure}%
\hfill
\begin{subfigure}[t]{.14\textwidth}
  \includegraphics[width=\linewidth]{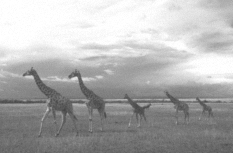}
\end{subfigure}%
\hfill
\begin{subfigure}[t]{.14\textwidth}
  \includegraphics[width=\linewidth]{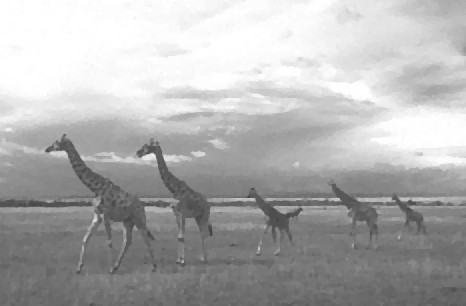}
\end{subfigure}%
\hfill
\begin{subfigure}[t]{.14\textwidth}
  \includegraphics[width=\linewidth]{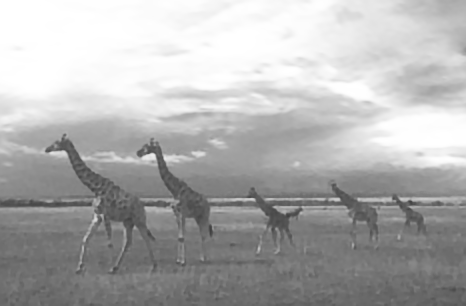}
\end{subfigure}%
\hfill
\begin{subfigure}[t]{.14\textwidth}
  \includegraphics[width=\linewidth]{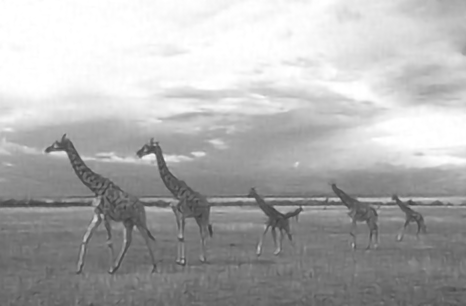}
\end{subfigure}%
\hfill
\begin{subfigure}[t]{.14\textwidth}
  \includegraphics[width=\linewidth]{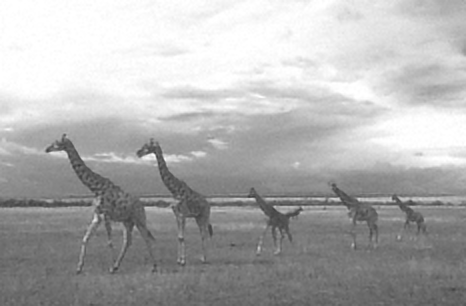}
\end{subfigure}%
\hfill
\begin{subfigure}[t]{.14\textwidth}
  \includegraphics[width=\linewidth]{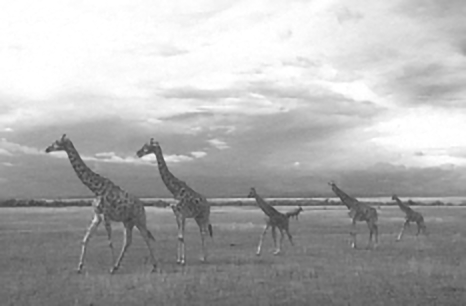}
\end{subfigure}%

\begin{subfigure}[t]{.14\textwidth}
  \includegraphics[width=\linewidth]{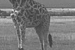}
  \caption*{HR}
\end{subfigure}%
\hfill
\begin{subfigure}[t]{.14\textwidth}
  \includegraphics[width=\linewidth]{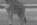}
  \caption*{LR}
\end{subfigure}%
\hfill
\begin{subfigure}[t]{.14\textwidth}
  \includegraphics[width=\linewidth]{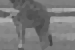}
    \caption*{$L^2$-TV}
\end{subfigure}%
\hfill
\begin{subfigure}[t]{.14\textwidth}
  \includegraphics[width=\linewidth]{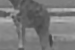}
  \caption*{DIP+TV}
\end{subfigure}%
\hfill
\begin{subfigure}[t]{.14\textwidth}
  \includegraphics[width=\linewidth]{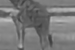}
  \caption*{ZSSR}  
\end{subfigure}%
\hfill
\begin{subfigure}[t]{.14\textwidth}
  \includegraphics[width=\linewidth]{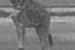}
  \caption*{DualSR}    
\end{subfigure}%
\hfill
\begin{subfigure}[t]{.14\textwidth}
  \includegraphics[width=\linewidth]{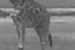}
  \caption*{patchNR}  
\end{subfigure}%
\caption{Zero-shot superresolution for three images from the BSD68 dataset. The zoomed-in part is marked with a white box in the ground truth image. \textit{Top}: full image. \textit{Bottom}: zoomed-in part.} \label{fig:zero_shot}
\end{figure*}

First we use a convolution with a $16 \times 16$ Gaussian blur kernel with standard deviation 1 and stride 2 as forward operator and add Gaussian noise with standard deviation 0.01 on the low-resolution observation. The patchNR is trained for 10000 optimizer steps with a learning rate of $0.0001$, a batch size of 128 and a patch size of $6 \times 6$.
Then, for reconstruction, we use a random subset of $N_p =  80000$ patches per iteration, a regularization parameter $\lambda = 0.25 \frac{s}{N_p}$ and optimize over 60 iterations using the Adam optimizer with a learning rate of 0.01. As comparison baselines we use $L^2$-TV \cite{ROF1992}, DIP+TV, ZSSR \cite{SCI2018} and DualSR \cite{EPC2021}. We test the methods on the BSD68 dataset \cite{martin2001database}and the resulting quality measures are given in Table \ref{Table_averagederrorMeasures_zero_shot}. In Figure~\ref{fig:zero_shot} we present three reconstruction examples of the test set. Reconstructions of the patchNR lead to less blurry images and in particular structures and edges are preserved. In contrast,  in $L^2$-TV and DIP+TV some parts of the images are smoothed out.

In a second experiment we consider the same forward operator as in Section~\ref{sec_superres}, that is a convolution with a $16 \times 16$ Gaussian blur kernel with standard deviation 2 and stride 4, and add Gaussian noise with standard deviation 0.01 on the low-resolution observation. The patchNR is trained for 10000 optimizer steps with a learning rate of $0.0001$, a batch size of 128 and a patch size of $6 \times 6$.
Then, for reconstruction, we use a random subset of $N_p =  50000$ patches per iteration, a regularization parameter $\lambda = 0.25 \frac{s}{N_p}$ and optimize over 60 iterations using the Adam optimizer with a learning rate of 0.01 

\begin{table*}[t]
\begin{center}
\scalebox{.8}{
\begin{tabular}[t]{c|cccccc} 
            & $L^2$-TV          & DIP+TV            & ZSSR               & DualSR                     &      patchNR \\
\hline 
PSNR        & 27.85 $\pm$ 0.55  & \textbf{27.99} $\pm$ 0.54  & 27.44 $\pm$ 0.55   &  27.64 $\pm$ 0.57          & 27.94 $\pm$ 0.55  \\
SSIM        & 0.768 $\pm$ 0.008 & 0.764 $\pm$ 0.007 & 0.758 $\pm$ 0.008   &  0.764 $\pm$ 0.008         & \textbf{0.776} $\pm$ 0.009 \\
\hline
Runtime     &  38.11s           & 234.00s           & 42.43s             &      46.55s                & 120.47s
\end{tabular}}
\caption{Zero-shot superresolution. Averaged quality measures and standard deviations of the high-resolution reconstructions.}  
\label{Table_averagederrorMeasures_zero_shot_SiC}
\end{center}
\end{table*}
\begin{figure*}[b!]
\centering
\begin{subfigure}[t]{.14\textwidth}
  \includegraphics[width=\linewidth]{images/SiC_x4/img_hr_rectangle.png}
\end{subfigure}%
\hfill
\begin{subfigure}[t]{.14\textwidth}
  \includegraphics[width=\linewidth]{images/SiC_x4/img_lr.png}
\end{subfigure}%
\hfill
\begin{subfigure}[t]{.14\textwidth}
\includegraphics[width=\linewidth]{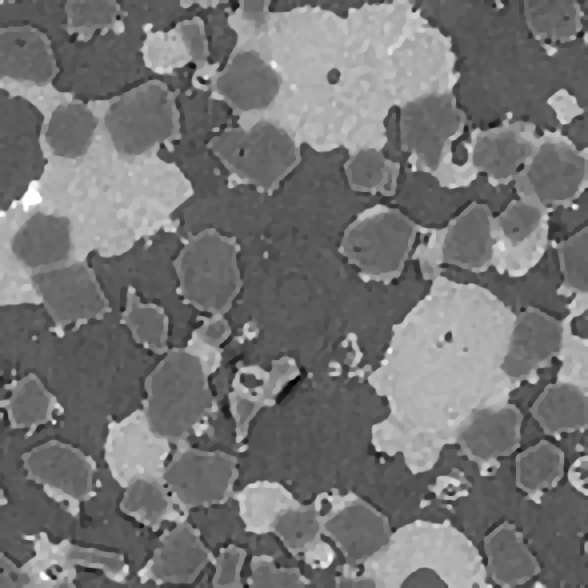}
\end{subfigure}%
\hfill
\begin{subfigure}[t]{.14\textwidth}
  \includegraphics[width=\linewidth]{images/SiC_x4/DIP_tv.png}
\end{subfigure}%
\hfill
\begin{subfigure}[t]{.14\textwidth}
  \includegraphics[width=\linewidth]{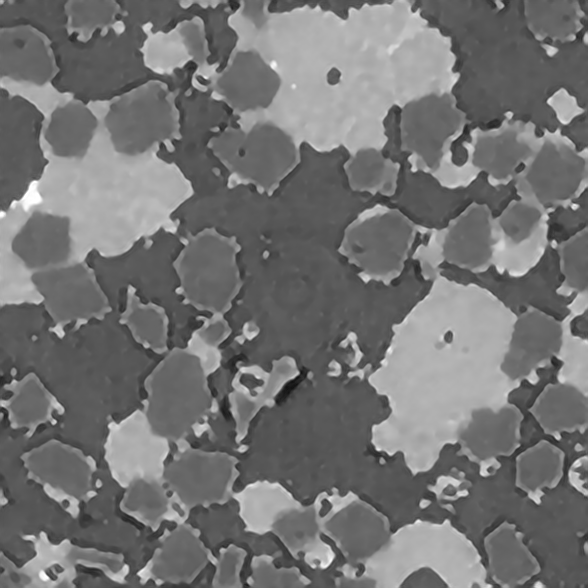}
\end{subfigure}%
\hfill
\begin{subfigure}[t]{.14\textwidth}
  \includegraphics[width=\linewidth]{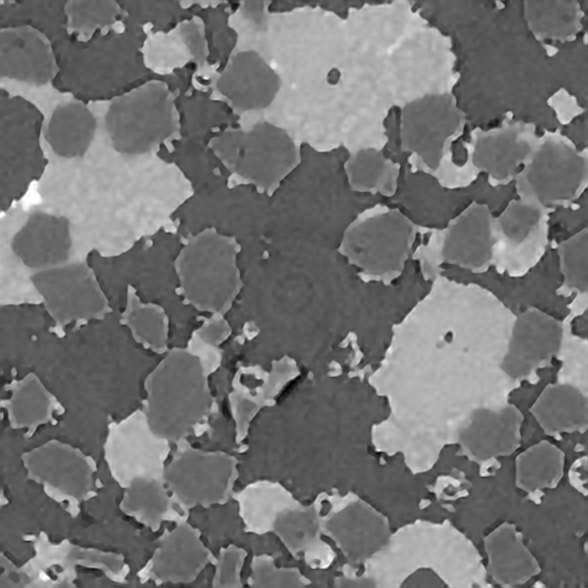}
\end{subfigure}%
\hfill
\begin{subfigure}[t]{.14\textwidth}
  \includegraphics[width=\linewidth]{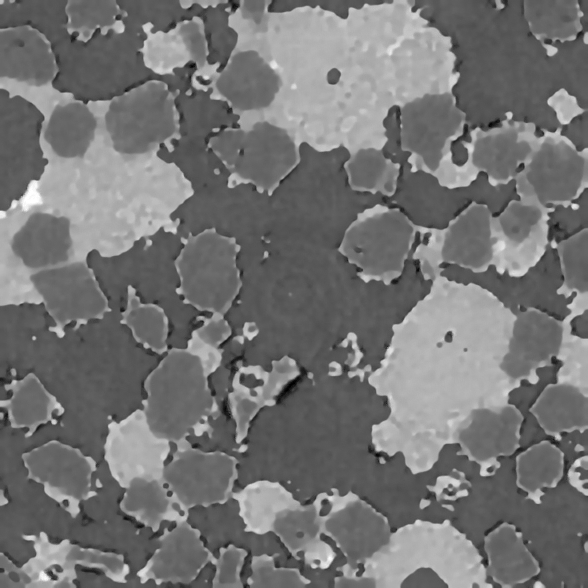}
\end{subfigure}%

\begin{subfigure}[t]{.14\textwidth}
  \includegraphics[width=\linewidth]{images/SiC_x4/hr_zoom2.png}
  \caption*{HR}
\end{subfigure}%
\hfill
\begin{subfigure}[t]{.14\textwidth}
  \includegraphics[width=\linewidth]{images/SiC_x4/lr_zoom2.png}
  \caption*{LR}
\end{subfigure}%
\hfill
\begin{subfigure}[t]{.14\textwidth}
  \includegraphics[width=\linewidth]{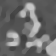}
  \caption*{$L^2$-TV}
\end{subfigure}%
\hfill
\begin{subfigure}[t]{.14\textwidth}
  \includegraphics[width=\linewidth]{images/SiC_x4/diptv_zoom2.png}
  \caption*{DIP + TV}
\end{subfigure}%
\hfill
\begin{subfigure}[t]{.14\textwidth}
  \includegraphics[width=\linewidth]{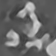}
  \caption*{ZSSR}
\end{subfigure}%
\hfill
\begin{subfigure}[t]{.14\textwidth}
  \includegraphics[width=\linewidth]{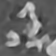}
  \caption*{DualSR}
\end{subfigure}%
\hfill
\begin{subfigure}[t]{.14\textwidth}
  \includegraphics[width=\linewidth]{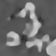}
  \captionsetup{justification=centering}
  \caption*{patchNR}
\end{subfigure}%
\caption{Zero-shot superresolution. The zoomed-in part is marked with a white box in the ground truth image. \textit{Top}: full image. \textit{Bottom}: zoomed-in part.
\textit{Top}: full image. \textit{Bottom}: zoomed-in part.
} \label{Fig_comp_zeroshot_SiC}
\end{figure*}

We test the methods on the same test images as in Section~\ref{sec_superres}. In Figure~\ref{Fig_comp_zeroshot_SiC} we compare the reconstructions of the different methods. We can observe a visually similar quality of the reconstructions for the DIP+TV and patchNR, while the other methods lead to a significant blur in the reconstructions. This can be also seen in the resulting quality measures given in Table~\ref{Table_averagederrorMeasures_zero_shot_SiC}. Note that here we do not consider natural images but material data and the assumption of self-similarity between different scales is not fulfilled. Consequently, we cannot expect a good reconstruction of the both methods ZSSR and DualSR. The methods $L^2$-TV and DIP+TV do not need these assumptions and thus perform well on the data, while there is a slight worsening in the quality of the patchNR in contrast to Section~\ref{sec_superres}. This nicely demonstrates the gain of using a small amount of ground truth data for training the patchNR.

\section{Discussion and Conclusions} \label{sec:conclusions}

In this paper, we introduced patchNRs, which are patch-based NFs used for regularizing the variational modeling of inverse problems. Learning patchNRs requires only few ground truth images. In particular no paired data is necessary.
We demonstrated the performance of our method by numerical examples and showed that it leads to better results than comparable, established methods, visually and in terms of the quality measures. Note that it is not clear how patch based learning influences biases in datasets. Using a small number of images can be very dangerous as these datasets are easily imbalanced. Further research needs to be invested into understanding how many images are sufficient for an adequate patch representation of a dataset and when the patch representation can be used as a prior for inverse problems. Moreover, quality measures for images are not sufficient for judging the quality of an image, in particular in medical applications. Further evaluation with medical expertise needs to be done before drawing any conclusions.
Moreover, all patch-based methods are essentially limited in the sense that they can not capture global image correlations. 
Furthermore, normalizing flows are often not able to capture out of distribution data \cite{outofdist}, so if a patch is far away from the patch "manifold", the likelihoods might be meaningless. However, it is an open question whether patch-based learning mitigates this effect.

The patchNR can be extended into several directions. First, we want to use the regularizer for training NNs in an model-based way for a fast reconstruction of several observations. Then the patchNR can be applied for uncertainty quantification by using, e.g., invertible architectures \cite{AKRK2019,dinhrnvp,HHS2021} or Langevin sampling methods \cite{song2022solving}.

\section*{Acknowledgements}
F.A. acknowledge support from the German Research Foundation (DFG) under Germany`s Excellence Strategy – The Berlin Mathematics
Research Center MATH+ within the project TP: EF3-7,
A.D. from (DFG; GRK 2224/1) and the Klaus Tschira Stiftung via the project MALDISTAR (project number 00.010.2019), 
P.H. from the DFG within the project SPP 2298 "Theoretical Foundations of Deep Learning"
and J.H. by the DFG within the project STE 571/16-1.
The data from Section~\ref{sec_superres} has been acquired in the frame of the EU Horizon 2020 Marie Sklodowska-Curie Actions Innovative Training 
Network MUMMERING (MUltiscale, Multimodal and Multidimensional imaging for EngineeRING, Grant Number 765604) at the beamline 
TOMCAT of the SLS by A. Saadaldin, D. Bernard, and F. Marone Welford. We acknowledge the Paul Scherrer Institut, Villigen, 
Switzerland for provision of synchrotron radiation beamtime at the TOMCAT beamline X02DA of the SLS.    
P.H. thanks Cosmas Heiß and Shayan Hundrieser for fruitful discussions.


\bibliography{literatur_patchINN}

\clearpage

\appendix

\section{Comparison Methods} 
\label{app_quality_measures_comparisons}

\begin{itemize}
\item \textbf{Bicubic interpolation.}
For superresolution, the simplest comparison is the bicubic interpolation \cite{K1981}, which is based on the local approximation of the image by polynomials of degree 3.

\item \textbf{Filtered Backprojection and UNet.}
For CT a classical method is the Filtered Backprojection (FBP), described by the adjoint Radon transform  \cite{Radon86}. We used the ODL implementation \cite{Adler2018} for our experiments. There we choose the filter type Hann and a frequency scaling of $0.641$. In order to improve the image quality of the FBP, a post-processing network can be learned. Here we consider the popular choice of a UNet (FBP+UNet) \cite{RFB15a}, which was used in \cite{JMFU17} for CT imaging. We use the implementation from \cite{Leuschner21}\footnote{available at \url{https://jleuschn.github.io/docs.dival/dival.reconstructors.fbpunet_reconstructor.html}}, which is trained on the 35820 training images of LoDoPaB dataset.

\item \textbf{Wasserstein Patch Prior.}
The idea of the Wasserstein Patch Prior (WPP) \cite{AH2022,Hertrich21}\footnote{We use the original implementation from \cite{Hertrich21} available at \\ \url{https://github.com/johertrich/Wasserstein_Patch_Prior}} is to use the Wasserstein-2 distance between the patch distribution of the reconstruction and the patch distribution of a given reference image. Here a high resolution reference image $\tilde{x}$ (or a high-resolution cutout) with a similar patch distribution as the unknown high resolution image $x$ is needed.
For a representation of structures of different sizes, $x$ and $\tilde{x}$ are considered at different scales $x_1 = x, \tilde{x}_1 = \tilde{x}, x_l = Ax_{l-1}, \tilde{x}_l = A\tilde{x}_{l-1}$ for a downsampling operator $A$.

Then the aim is to minimize the functional 
\begin{align}
\mathcal{J}(x) = \mathcal{D}(f(x),y) + \lambda \sum_{l=1}^L W_2^2 (\mu_{x_l}, \mu_{\tilde{x}_l}),
\end{align}
where the patch distributions of $x$ and $\tilde{x}$ are defined by
\begin{align}
\mu_{x_l} = \frac{1}{N_l} \sum_{i=1}^{N_l} \delta_{P_i x_l}, \mu_{\tilde{x}_l} = \frac{1}{\tilde{N}_l} \sum_{i=1}^{\tilde{N}_l} \delta_{P_i \tilde{x}_l}.
\end{align}

\item \textbf{Deep Image Prior with TV regularization.}
The idea of the Deep Image Prior (DIP) \cite{UVL2018} is to solve the optimization problem
$$
\hat \theta\in\argmin_{\theta} \mathcal{D}(f(G_\theta(z)),y),
$$
where $G_\theta$ is a convolutional neural network with parameters $\theta$ and $z$ is a randomly chosen input. 
Then, the reconstruction $\hat x$ is given by $\hat x=G_\theta(z)$.
It was shown in \cite{UVL2018} that DIP admits competitive results for many inverse problems. A combination of DIP with the TV (DIP+TV) regualizer was successfully used in \cite{baguer2020computed}\footnote{For superresolution, we use the original implementation from \cite{UVL2018} available at\\ \url{https://github.com/DmitryUlyanov/deep-image-prior} in combination with a TV regulariser; for CT, we use the original implementation from \cite{baguer2020computed} available at\\ \url{https://github.com/jleuschn/dival/blob/master/dival/reconstructors}} for CT reconstruction. 
Here the optimization problem is extended to 
$$
\hat \theta\in\argmin_{\theta} \mathcal{D}(f(G_\theta(z)),y) + \text{TV}((G_\theta(z)),
$$
Note that each reconstruction with the DIP+TV requires the training of a neural network. In contrast to WPP and patchNR, the DIP+TV is a data-free method, i.e., it does not require any clean image for training. We tested a pre-trained variant of the DIP+TV on the same training images. However, this did not improve the results significantly. Note that warm-start intialization techniques were proposed in \cite{barbano2021deep}. Here, the authors observed faster reconstruction times for a pre-trained DIP but not significantly better results. Therefore we stick to the random initialization.

\item \textbf{Plug-and-Play Forward Backward Splitting with DRUNet.} In Plug-and-Play methods, first introduced by \cite{VBW2013}, the main idea is to consider an optimization algorithm from convex analysis for solving \eqref{eq_variational} and to replace the proximal operator with respect to the regularizer by a more general denoiser. Here, modify the forward backward splitting algorithm
\begin{align*}
x_{n+1} = \text{prox}_{\eta R}(x_n - \eta \nabla_x \mathcal{D}(f(x_n),y))
\end{align*}
for minimizing the functional \eqref{eq_variational} by the iteration
\begin{align} \label{equ_pnp}
x_{n+1} = \mathcal{G}(x_n - \eta \nabla_x \mathcal{D}(f(x_n),y)),
\end{align}
where $\mathcal{G}$ is a neural network trained for denoising natural images. We use the DRUNet (DPIR) from \cite{ZLZZ2021} as denoiser and run \eqref{equ_pnp} for 100 iterations. Note that the denoiser is trained on natural images and not on images from the specific image domain.
However, as we have given only very few clean images from the considered image domain it is impossible to train a denoiser with comparable quality on them.

\item \textbf{Local Adversarial Regularizer.}
The adversarial regularizer was introduced in \cite{lunz2018adversarial} and this framework was recently applied for learning patch-based regularizers (localAR) \cite{prost2021learning}. The idea is to train a network $r_\theta$ as a critic between patch distributions in order to distinguish between clean and degraded patches. The network is trained by minimizing
\begin{align}
D(\theta) = \mathbb{E}_{z \sim \mathbb{P}_c}\big[r_\theta (z)\big] - \mathbb{E}_{z \sim \mathbb{P}_n}\big[r_\theta (z)\big] + \mu \mathbb{E}_{z \sim \mathbb{P}_i} \big[(\Vert \nabla_z r_\theta (z) \Vert_2 -1)^2 \big],
\end{align}
where $\mathbb{P}_c$ and $\mathbb{P}_n$ are the distributions of clean and noisy patches, respectively, and $\mathbb{P}_i$ is the distribution of all lines connecting samples in $\mathbb{P}_c$ and $\mathbb{P}_n$. Then the aim is to minimize the functional 
\begin{align}
\mathcal{J}(x) = \mathcal{D}(f(x),y) + \lambda \frac{1}{\vert I \vert} \sum_{i \in \mathcal{I}} r_\theta(P_i(x))
\end{align}
For our experiments we used the code of \cite{prost2021learning}, but instead of patch size 15 we used the patch size 6 and replaced the fully convolutional discriminator by a discriminator with 2 convolutional layers, followed by 4 fully connected layers.

\item \textbf{Expected Patch Log Likelihood.}
The Expected Patch Log Likelihood (EPLL) prior \cite{ZW2011} assumes that the patch distribution of the ground truth can be approximated by a GMM $p$ fitted to the patch distribution of the reference images. Reconstruction is done by minimizing the functional
\begin{align*}
\mathcal{J}(x) = \mathcal{D}(f(x),y) - \lambda \sum_{i=1}^N p(P_i (x)).
\end{align*}
In \cite{ZW2011} the authors used half quadratic splitting to optimize this objective function. For our experiments we implemented the GMM in PyTorch and used the Adam \cite{KB2015} optimizer. This is because we are not aware how to efficiently implement the half quadratic splitting for the CT forward operator.

\item \textbf{Asymmetric CNN.} The asymmetric CNN (ACNN) \cite{TXZLZ21} is a 23-layer CNN trained in a data-based way on 28 paired images pairs of the composite "SiC Diamonds" using the $L^2$ loss function.

\item \textbf{Zero Shot Super-Resolution.} For Zero Shot Super-Resolution (ZSSR) \cite{SCI2018} the main assumption is that the patch distribution of natural images is self-similar across the scales. Exploiting this fact, a lightweight CNN is trained in a supervised fashion on a paired dataset generated by the low-resolution image itself. This dataset is created by downsampling the low-resolution image to obtain a lower-resolution image and is enlarged by data augmentation like random rotations, random crops or mirror reflections. A high-resolution prediction is then created by applying the trained model to the low-resolution observation. For our experiments we reimplemented the ZSSR.

\item \textbf{DualSR.} The idea of DualSR \cite{EPC2021}\footnote{We use the original implementation available at \url{https://github.com/memad73/DualSR}} is a dual-path pipeline, where an upsampling GAN learns the upsampling process and a downsampling GAN learns the degradation model, trained on cropped parts of the low-resolution image. This method can be used for blind superresolution, but since we know the forward operator in our case, we replace the downsampling GAN by the given degradation process.

\end{itemize}

\section{Further examples}\label{app:furhter_examples}
Here we give some more examples of our experiments from Section \ref{Sec_experiments}; see Figure \ref{fig:SIC_further}, Figure \ref{app:Fig_CT_img} and Figure \ref{app:Fig_CT_img_further}. 

\begin{figure*}[t]
\centering
\begin{subfigure}[t]{.15\textwidth}
  \includegraphics[width=\linewidth]{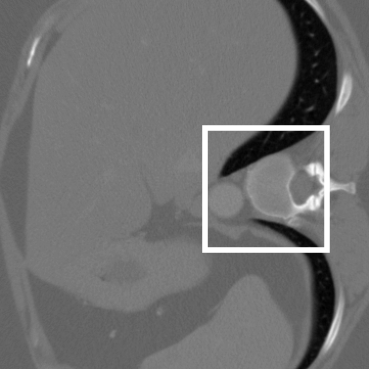}
\end{subfigure}%
\hfill
\begin{subfigure}[t]{.15\textwidth}
  \includegraphics[width=\linewidth]{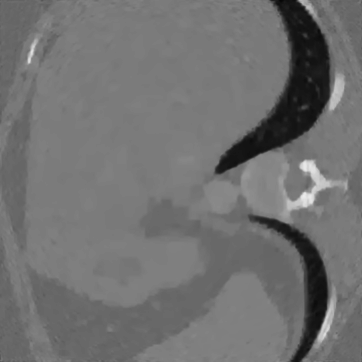}
\end{subfigure}%
\hfill
\begin{subfigure}[t]{.15\textwidth}
  \includegraphics[width=\linewidth]{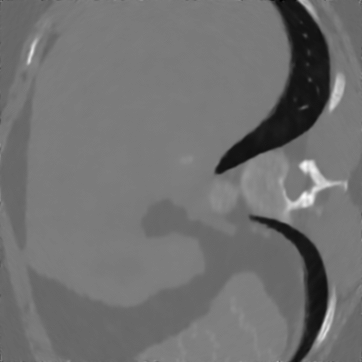}
\end{subfigure}%
\hfill
\begin{subfigure}[t]{.15\textwidth}
  \includegraphics[width=\linewidth]{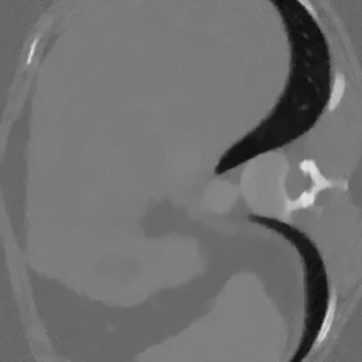}
\end{subfigure}%
\hfill
\begin{subfigure}[t]{.15\textwidth}
  \includegraphics[width=\linewidth]{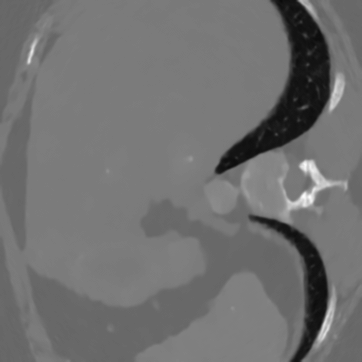}
\end{subfigure}%
\hfill
\begin{subfigure}[t]{.15\textwidth}
  \includegraphics[width=\linewidth]{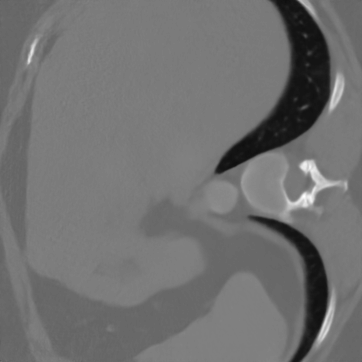}
\end{subfigure}%

\begin{subfigure}[t]{.15\textwidth}
  \includegraphics[width=\linewidth]{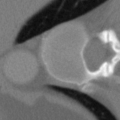}
\end{subfigure}%
\hfill
\begin{subfigure}[t]{.15\textwidth}
  \includegraphics[width=\linewidth]{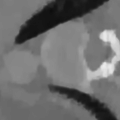}
\end{subfigure}%
\hfill
\begin{subfigure}[t]{.15\textwidth}
  \includegraphics[width=\linewidth]{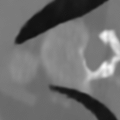}
\end{subfigure}%
\hfill
\begin{subfigure}[t]{.15\textwidth}
  \includegraphics[width=\linewidth]{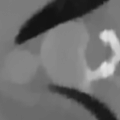}
\end{subfigure}%
\hfill
\begin{subfigure}[t]{.15\textwidth}
  \includegraphics[width=\linewidth]{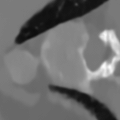}
\end{subfigure}%
\hfill
\begin{subfigure}[t]{.15\textwidth}
  \includegraphics[width=\linewidth]{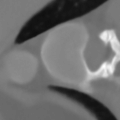}
\end{subfigure}%
\vspace{.5cm}

\begin{subfigure}[t]{.15\textwidth}
  \includegraphics[width=\linewidth]{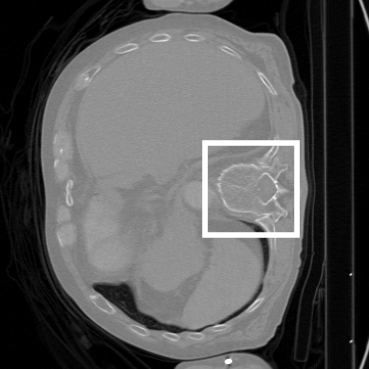}
\end{subfigure}%
\hfill
\begin{subfigure}[t]{.15\textwidth}
  \includegraphics[width=\linewidth]{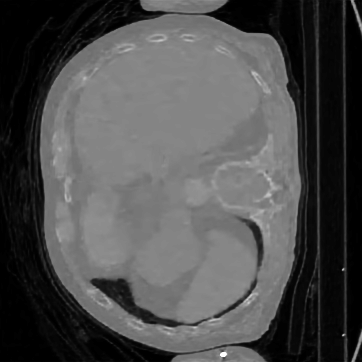}
\end{subfigure}%
\hfill
\begin{subfigure}[t]{.15\textwidth}
  \includegraphics[width=\linewidth]{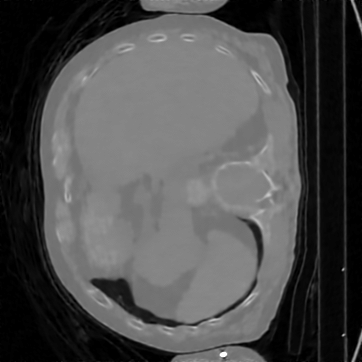}
\end{subfigure}%
\hfill
\begin{subfigure}[t]{.15\textwidth}
  \includegraphics[width=\linewidth]{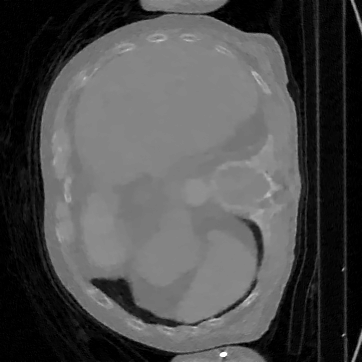}
\end{subfigure}%
\hfill
\begin{subfigure}[t]{.15\textwidth}
  \includegraphics[width=\linewidth]{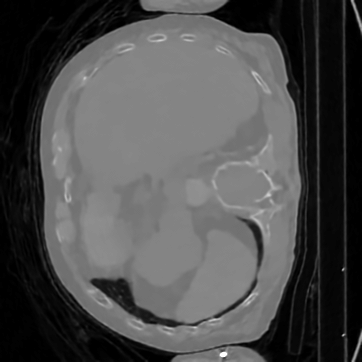}
\end{subfigure}%
\hfill
\begin{subfigure}[t]{.15\textwidth}
  \includegraphics[width=\linewidth]{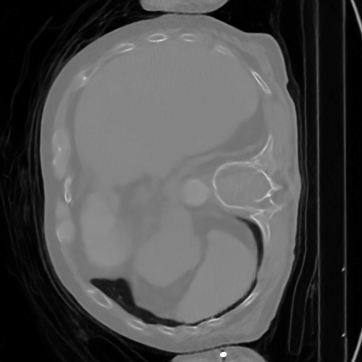}
\end{subfigure}%

\begin{subfigure}[t]{.15\textwidth}
  \includegraphics[width=\linewidth]{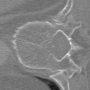}
\end{subfigure}%
\hfill
\begin{subfigure}[t]{.15\textwidth}
  \includegraphics[width=\linewidth]{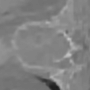}
\end{subfigure}%
\hfill
\begin{subfigure}[t]{.15\textwidth}
  \includegraphics[width=\linewidth]{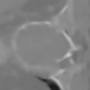}
\end{subfigure}%
\hfill
\begin{subfigure}[t]{.15\textwidth}
  \includegraphics[width=\linewidth]{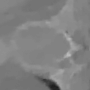}
\end{subfigure}%
\hfill
\begin{subfigure}[t]{.15\textwidth}
  \includegraphics[width=\linewidth]{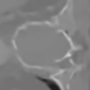}
\end{subfigure}%
\hfill
\begin{subfigure}[t]{.15\textwidth}
  \includegraphics[width=\linewidth]{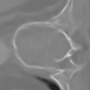}
\end{subfigure}%
\vspace{.5cm}

\begin{subfigure}[t]{.15\textwidth}
  \includegraphics[width=\linewidth]{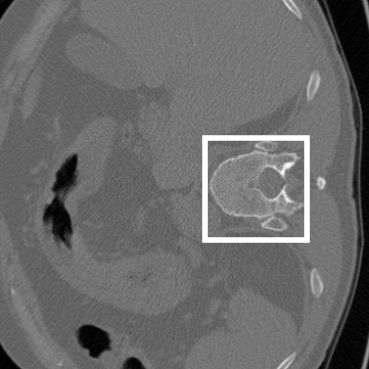}
\end{subfigure}%
\hfill
\begin{subfigure}[t]{.15\textwidth}
  \includegraphics[width=\linewidth]{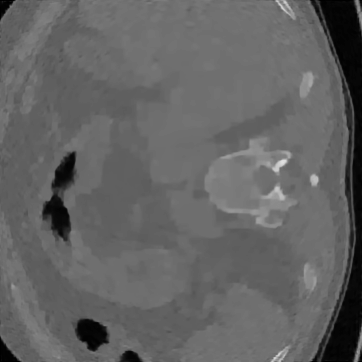}
\end{subfigure}%
\hfill
\begin{subfigure}[t]{.15\textwidth}
  \includegraphics[width=\linewidth]{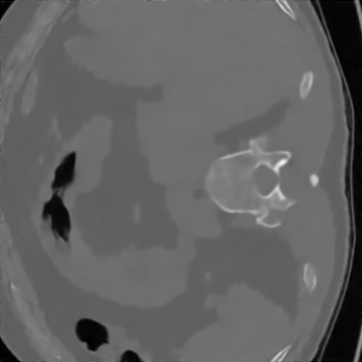}
\end{subfigure}%
\hfill
\begin{subfigure}[t]{.15\textwidth}
  \includegraphics[width=\linewidth]{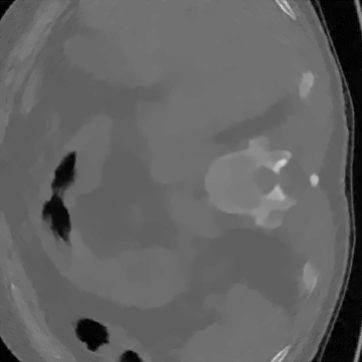}
\end{subfigure}%
\hfill
\begin{subfigure}[t]{.15\textwidth}
  \includegraphics[width=\linewidth]{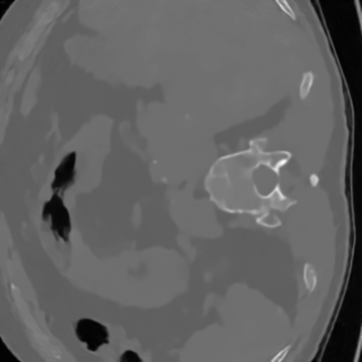}
\end{subfigure}%
\hfill
\begin{subfigure}[t]{.15\textwidth}
  \includegraphics[width=\linewidth]{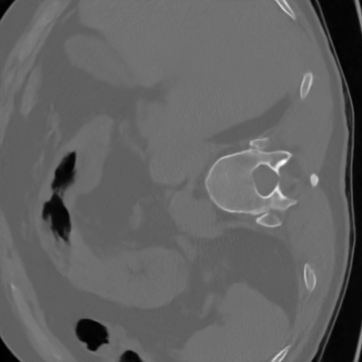}
\end{subfigure}%

\begin{subfigure}[t]{.15\textwidth}
  \includegraphics[width=\linewidth]{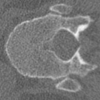}
\end{subfigure}%
\hfill
\begin{subfigure}[t]{.15\textwidth}
  \includegraphics[width=\linewidth]{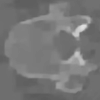}
\end{subfigure}%
\hfill
\begin{subfigure}[t]{.15\textwidth}
  \includegraphics[width=\linewidth]{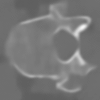}
\end{subfigure}%
\hfill
\begin{subfigure}[t]{.15\textwidth}
  \includegraphics[width=\linewidth]{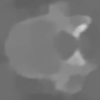}
\end{subfigure}%
\hfill
\begin{subfigure}[t]{.15\textwidth}
  \includegraphics[width=\linewidth]{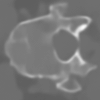}
\end{subfigure}%
\hfill
\begin{subfigure}[t]{.15\textwidth}
  \includegraphics[width=\linewidth]{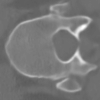}
\end{subfigure}%
\vspace{.5cm}

\begin{subfigure}[t]{.15\textwidth}
  \includegraphics[width=\linewidth]{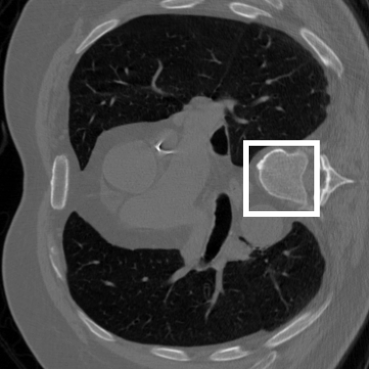}
\end{subfigure}%
\hfill
\begin{subfigure}[t]{.15\textwidth}
  \includegraphics[width=\linewidth]{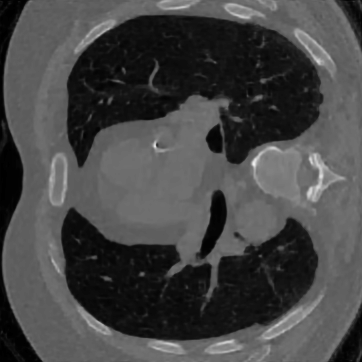}
\end{subfigure}%
\hfill
\begin{subfigure}[t]{.15\textwidth}
  \includegraphics[width=\linewidth]{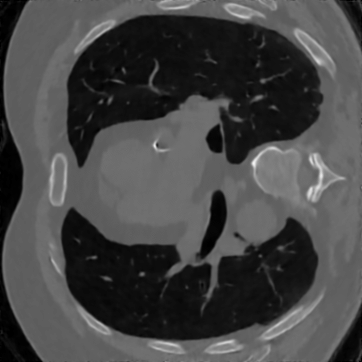}
\end{subfigure}%
\hfill
\begin{subfigure}[t]{.15\textwidth}
  \includegraphics[width=\linewidth]{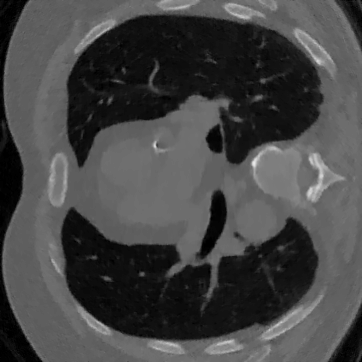}
\end{subfigure}%
\hfill
\begin{subfigure}[t]{.15\textwidth}
  \includegraphics[width=\linewidth]{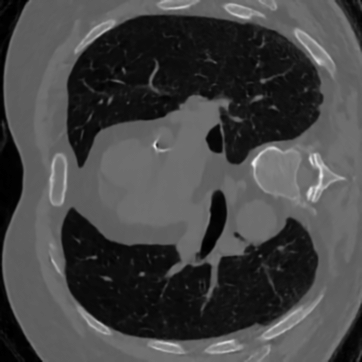}
\end{subfigure}%
\hfill
\begin{subfigure}[t]{.15\textwidth}
  \includegraphics[width=\linewidth]{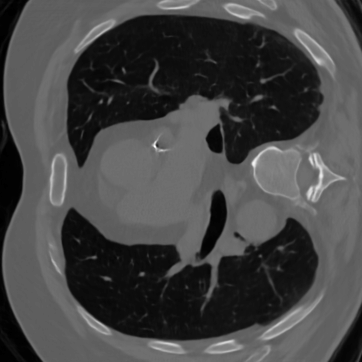}
\end{subfigure}%

\begin{subfigure}[t]{.15\textwidth}
  \includegraphics[width=\linewidth]{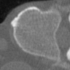}
  \caption*{Ground truth}
\end{subfigure}%
\hfill
\begin{subfigure}[t]{.15\textwidth}
  \includegraphics[width=\linewidth]{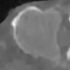}
  \caption*{DIP+TV}  
\end{subfigure}%
\hfill
\begin{subfigure}[t]{.15\textwidth}
  \includegraphics[width=\linewidth]{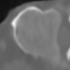}
  \caption*{EPLL}  
\end{subfigure}%
\hfill
\begin{subfigure}[t]{.15\textwidth}
  \includegraphics[width=\linewidth]{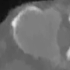}
  \caption*{localAR}  
\end{subfigure}%
\hfill
\begin{subfigure}[t]{.15\textwidth}
  \includegraphics[width=\linewidth]{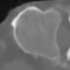}
  \caption*{patchNR}  
\end{subfigure}%
\hfill
\begin{subfigure}[t]{.15\textwidth}
  \includegraphics[width=\linewidth]{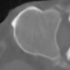}
  \captionsetup{justification=centering}
  \caption*{{FBP+UNet \\(data-based)}}  
\end{subfigure}%
\caption{Full angle reconstruction of the ground truth CT image using different methods. The zoomed-in part is marked with a white box in the ground truth image.
\textit{Top}: full image. \textit{Bottom}: zoomed-in part.} \label{app:Fig_CT_img}
\end{figure*}

\begin{figure*}
\centering
\begin{subfigure}[t]{.15\textwidth}
  \includegraphics[width=\linewidth]{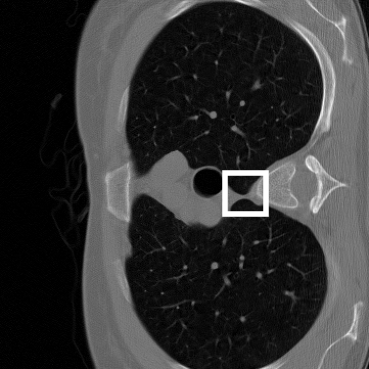}
\end{subfigure}%
\hfill
\begin{subfigure}[t]{.15\textwidth}
  \includegraphics[width=\linewidth]{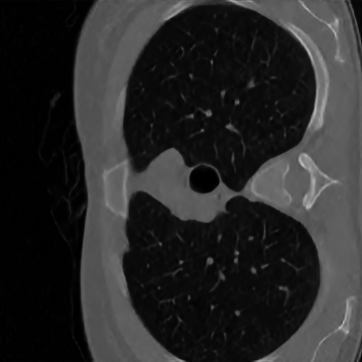}
\end{subfigure}%
\hfill
\begin{subfigure}[t]{.15\textwidth}
  \includegraphics[width=\linewidth]{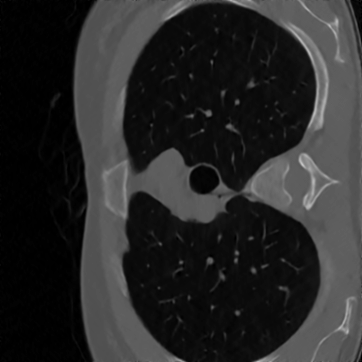}
\end{subfigure}%
\hfill
\begin{subfigure}[t]{.15\textwidth}
  \includegraphics[width=\linewidth]{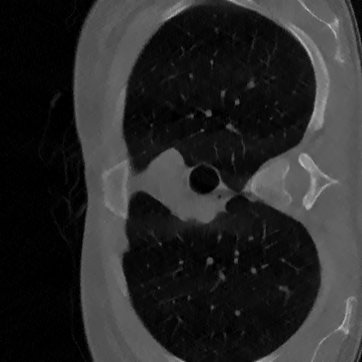}
\end{subfigure}%
\hfill
\begin{subfigure}[t]{.15\textwidth}
  \includegraphics[width=\linewidth]{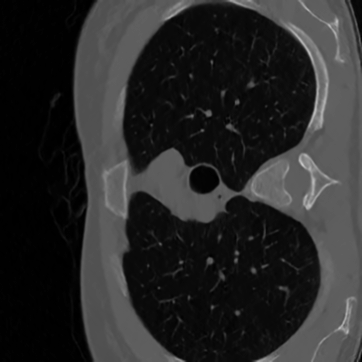}
\end{subfigure}%
\hfill
\begin{subfigure}[t]{.15\textwidth}
  \includegraphics[width=\linewidth]{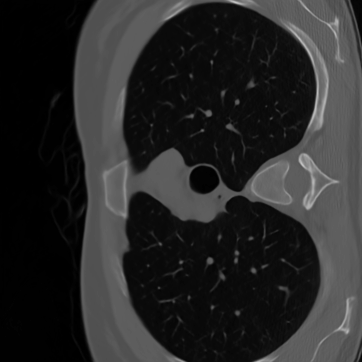}
\end{subfigure}%

\begin{subfigure}[t]{.15\textwidth}
  \includegraphics[width=\linewidth]{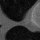}
\end{subfigure}%
\hfill
\begin{subfigure}[t]{.15\textwidth}
  \includegraphics[width=\linewidth]{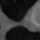}
\end{subfigure}%
\hfill
\begin{subfigure}[t]{.15\textwidth}
  \includegraphics[width=\linewidth]{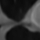}
\end{subfigure}%
\hfill
\begin{subfigure}[t]{.15\textwidth}
  \includegraphics[width=\linewidth]{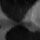}
\end{subfigure}%
\hfill
\begin{subfigure}[t]{.15\textwidth}
  \includegraphics[width=\linewidth]{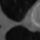}
\end{subfigure}%
\hfill
\begin{subfigure}[t]{.15\textwidth}
  \includegraphics[width=\linewidth]{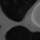}
\end{subfigure}%
\vspace{.5cm}

\begin{subfigure}[t]{.15\textwidth}
  \includegraphics[width=\linewidth]{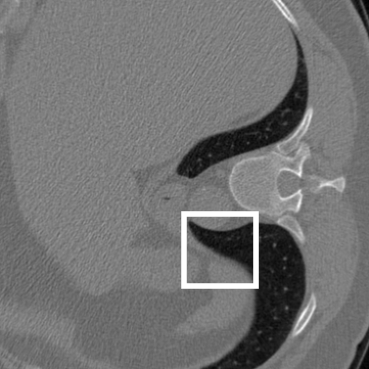}
\end{subfigure}%
\hfill
\begin{subfigure}[t]{.15\textwidth}
  \includegraphics[width=\linewidth]{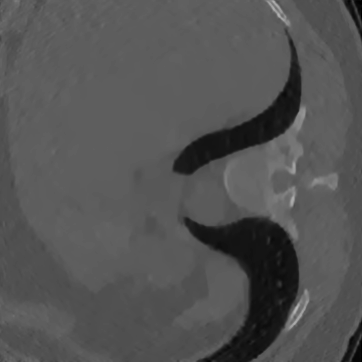}
\end{subfigure}%
\hfill
\begin{subfigure}[t]{.15\textwidth}
  \includegraphics[width=\linewidth]{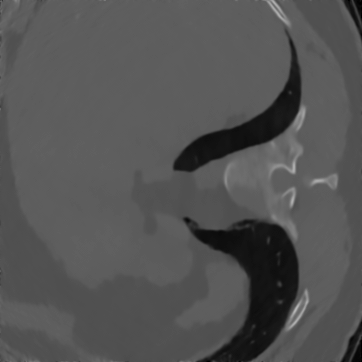}
\end{subfigure}%
\hfill
\begin{subfigure}[t]{.15\textwidth}
  \includegraphics[width=\linewidth]{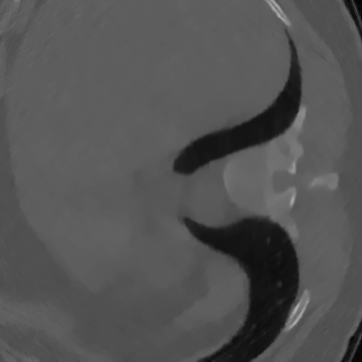}
\end{subfigure}%
\hfill
\begin{subfigure}[t]{.15\textwidth}
  \includegraphics[width=\linewidth]{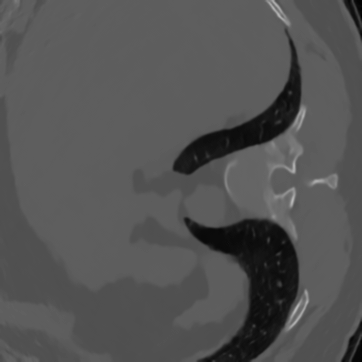}
\end{subfigure}%
\hfill
\begin{subfigure}[t]{.15\textwidth}
  \includegraphics[width=\linewidth]{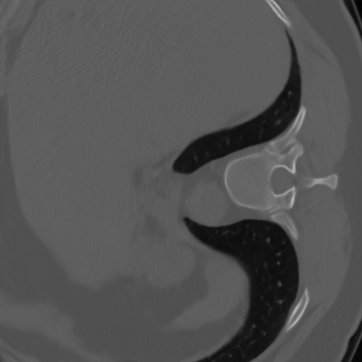}
\end{subfigure}%

\begin{subfigure}[t]{.15\textwidth}
  \includegraphics[width=\linewidth]{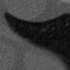}
\end{subfigure}%
\hfill
\begin{subfigure}[t]{.15\textwidth}
  \includegraphics[width=\linewidth]{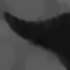} 
\end{subfigure}%
\hfill
\begin{subfigure}[t]{.15\textwidth}
  \includegraphics[width=\linewidth]{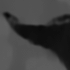}
\end{subfigure}%
\hfill
\begin{subfigure}[t]{.15\textwidth}
  \includegraphics[width=\linewidth]{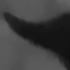}
\end{subfigure}%
\hfill
\begin{subfigure}[t]{.15\textwidth}
  \includegraphics[width=\linewidth]{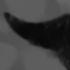}
\end{subfigure}%
\hfill
\begin{subfigure}[t]{.15\textwidth}
  \includegraphics[width=\linewidth]{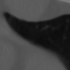}
\end{subfigure}%
\vspace{.5cm}

\begin{subfigure}[t]{.15\textwidth}
  \includegraphics[width=\linewidth]{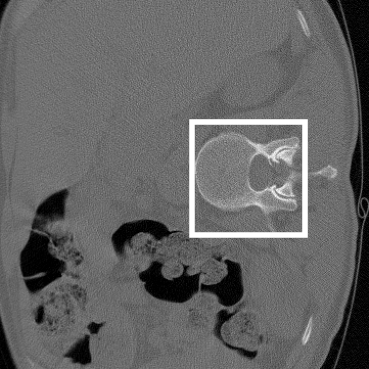}
\end{subfigure}%
\hfill
\begin{subfigure}[t]{.15\textwidth}
  \includegraphics[width=\linewidth]{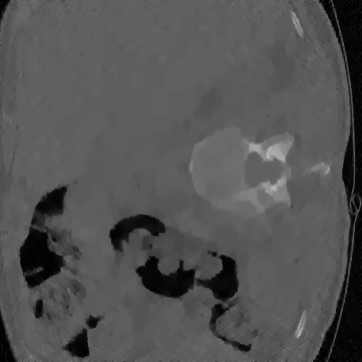}
\end{subfigure}%
\hfill
\begin{subfigure}[t]{.15\textwidth}
  \includegraphics[width=\linewidth]{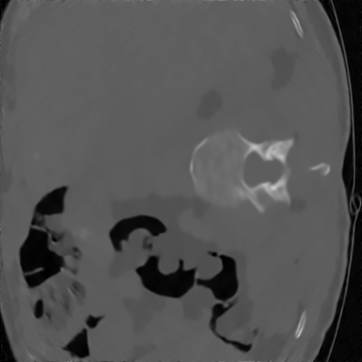}
\end{subfigure}%
\hfill
\begin{subfigure}[t]{.15\textwidth}
  \includegraphics[width=\linewidth]{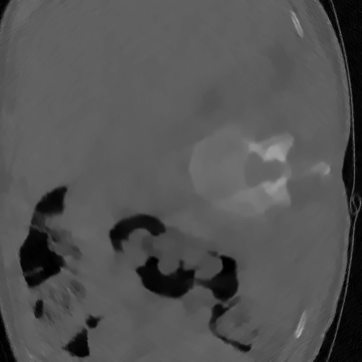}
\end{subfigure}%
\hfill
\begin{subfigure}[t]{.15\textwidth}
  \includegraphics[width=\linewidth]{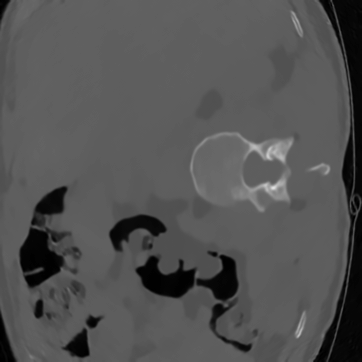}
\end{subfigure}%
\hfill
\begin{subfigure}[t]{.15\textwidth}
  \includegraphics[width=\linewidth]{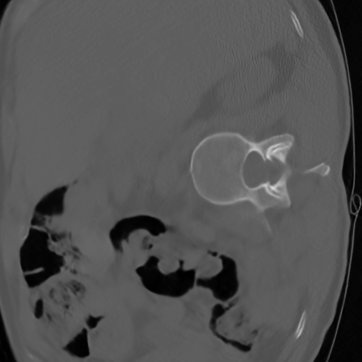}
\end{subfigure}%

\begin{subfigure}[t]{.15\textwidth}
  \includegraphics[width=\linewidth]{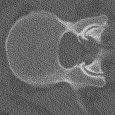}
\end{subfigure}%
\hfill
\begin{subfigure}[t]{.15\textwidth}
  \includegraphics[width=\linewidth]{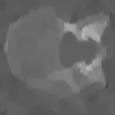}
\end{subfigure}%
\hfill
\begin{subfigure}[t]{.15\textwidth}
  \includegraphics[width=\linewidth]{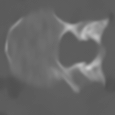}
\end{subfigure}%
\hfill
\begin{subfigure}[t]{.15\textwidth}
  \includegraphics[width=\linewidth]{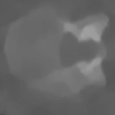}
\end{subfigure}%
\hfill
\begin{subfigure}[t]{.15\textwidth}
  \includegraphics[width=\linewidth]{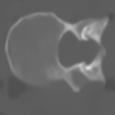}
\end{subfigure}%
\hfill
\begin{subfigure}[t]{.15\textwidth}
  \includegraphics[width=\linewidth]{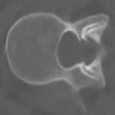}
\end{subfigure}%
\vspace{.5cm}

\begin{subfigure}[t]{.15\textwidth}
  \includegraphics[width=\linewidth]{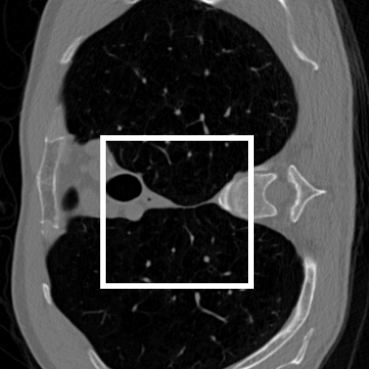}
\end{subfigure}%
\hfill
\begin{subfigure}[t]{.15\textwidth}
  \includegraphics[width=\linewidth]{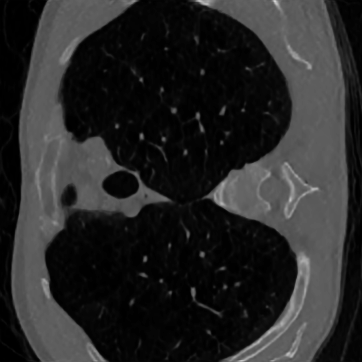}
\end{subfigure}%
\hfill
\begin{subfigure}[t]{.15\textwidth}
  \includegraphics[width=\linewidth]{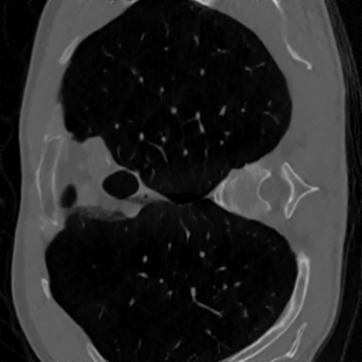}
\end{subfigure}%
\hfill
\begin{subfigure}[t]{.15\textwidth}
  \includegraphics[width=\linewidth]{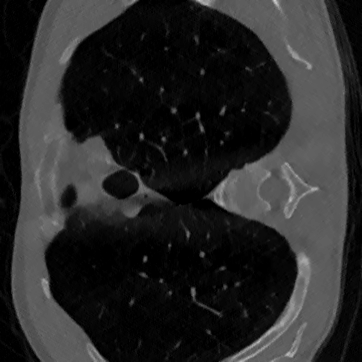}
\end{subfigure}%
\hfill
\begin{subfigure}[t]{.15\textwidth}
  \includegraphics[width=\linewidth]{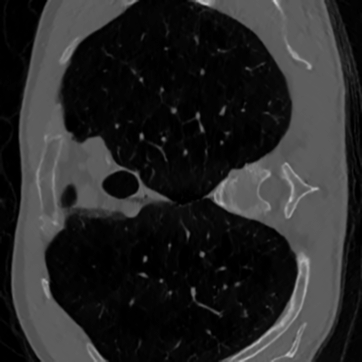}
\end{subfigure}%
\hfill
\begin{subfigure}[t]{.15\textwidth}
  \includegraphics[width=\linewidth]{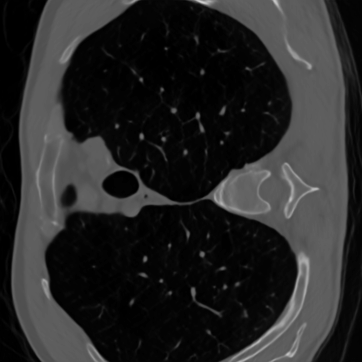}
\end{subfigure}%

\begin{subfigure}[t]{.15\textwidth}
  \includegraphics[width=\linewidth]{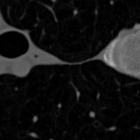}
  \caption*{Ground truth}
\end{subfigure}%
\hfill
\begin{subfigure}[t]{.15\textwidth}
  \includegraphics[width=\linewidth]{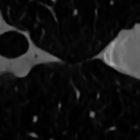}
  \caption*{DIP+TV}  
\end{subfigure}%
\hfill
\begin{subfigure}[t]{.15\textwidth}
  \includegraphics[width=\linewidth]{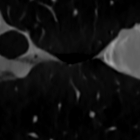}
  \caption*{EPLL}  
\end{subfigure}%
\hfill
\begin{subfigure}[t]{.15\textwidth}
  \includegraphics[width=\linewidth]{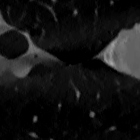}
  \caption*{localAR}  
\end{subfigure}%
\hfill
\begin{subfigure}[t]{.15\textwidth}
  \includegraphics[width=\linewidth]{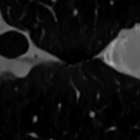}
  \caption*{patchNR}  
\end{subfigure}%
\hfill
\begin{subfigure}[t]{.15\textwidth}
  \includegraphics[width=\linewidth]{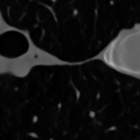}
  \captionsetup{justification=centering}
  \caption*{{FBP+UNet \\(data-based)}}  
\end{subfigure}%
\caption{Limited angle reconstruction of the ground truth CT image using different methods. The zoomed-in part is marked with a white box in the ground truth image.
\textit{Top}: full image. \textit{Bottom}: zoomed-in part.} \label{app:Fig_CT_img_further}
\end{figure*}

\begin{figure*}
\centering
\begin{subfigure}[t]{.14\textwidth}
  \includegraphics[width=\linewidth]{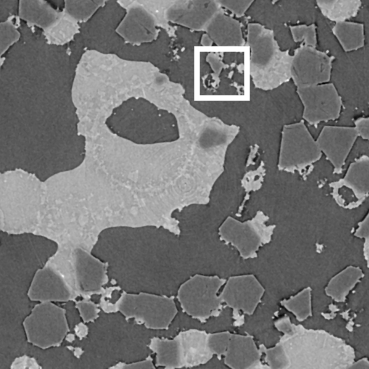}
\end{subfigure}%
\hfill
\begin{subfigure}[t]{.14\textwidth}
  \includegraphics[width=\linewidth]{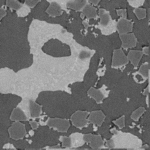}
\end{subfigure}%
\hfill
\begin{subfigure}[t]{.14\textwidth}
  \includegraphics[width=\linewidth]{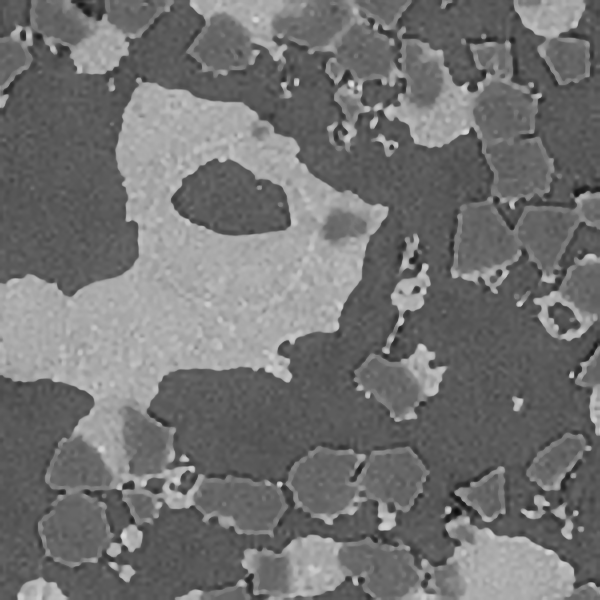}
\end{subfigure}%
\hfill
\begin{subfigure}[t]{.14\textwidth}
  \includegraphics[width=\linewidth]{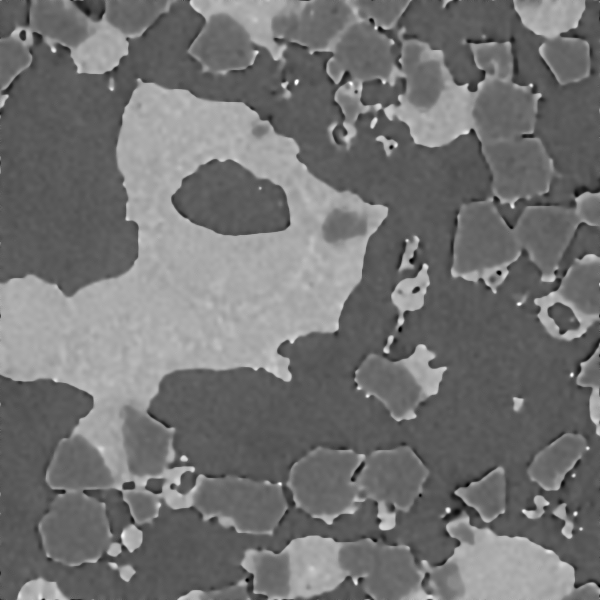}
\end{subfigure}%
\hfill
\begin{subfigure}[t]{.14\textwidth}
  \includegraphics[width=\linewidth]{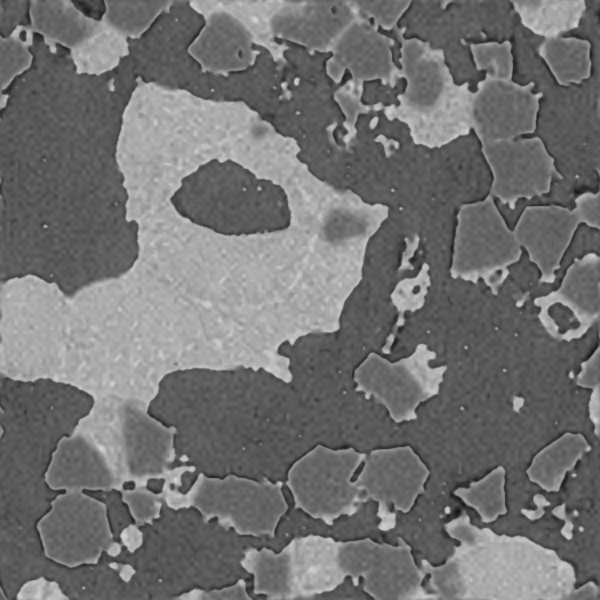}
\end{subfigure}%
\hfill
\begin{subfigure}[t]{.14\textwidth}
  \includegraphics[width=\linewidth]{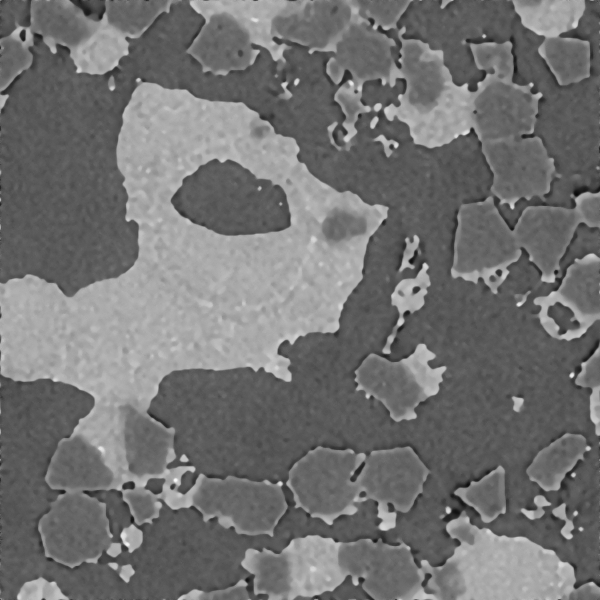}
\end{subfigure}%
\hspace{0.03cm}
\begin{subfigure}[t]{.14\textwidth}
  \includegraphics[width=\linewidth]{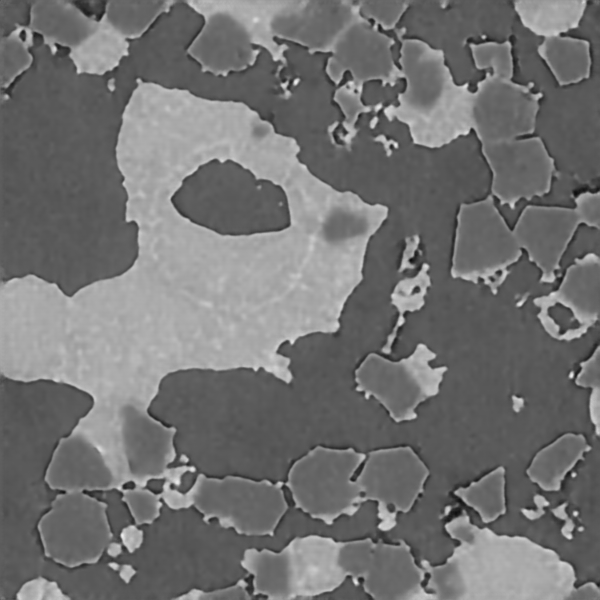}
\end{subfigure}%

\begin{subfigure}[t]{.14\textwidth}
  \includegraphics[width=\linewidth]{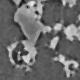}
\end{subfigure}%
\hfill
\begin{subfigure}[t]{.14\textwidth}
  \includegraphics[width=\linewidth]{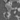}
\end{subfigure}%
\hfill
\begin{subfigure}[t]{.14\textwidth}
  \includegraphics[width=\linewidth]{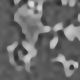}
\end{subfigure}%
\hfill
\begin{subfigure}[t]{.14\textwidth}
  \includegraphics[width=\linewidth]{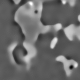}
\end{subfigure}%
\hfill
\begin{subfigure}[t]{.14\textwidth}
  \includegraphics[width=\linewidth]{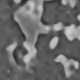}
\end{subfigure}%
\hfill
\begin{subfigure}[t]{.14\textwidth}
  \includegraphics[width=\linewidth]{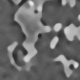}
\end{subfigure}%
\hspace{0.03cm}
\begin{subfigure}[t]{.14\textwidth}
  \includegraphics[width=\linewidth]{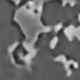}
\end{subfigure}%
\vspace{.5cm}

\begin{subfigure}[t]{.14\textwidth}
  \includegraphics[width=\linewidth]{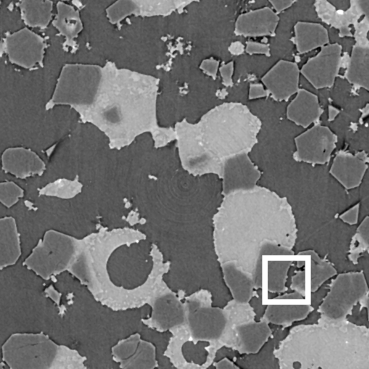}
\end{subfigure}%
\hfill
\begin{subfigure}[t]{.14\textwidth}
  \includegraphics[width=\linewidth]{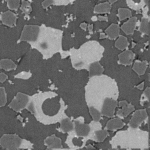}
\end{subfigure}%
\hfill
\begin{subfigure}[t]{.14\textwidth}
  \includegraphics[width=\linewidth]{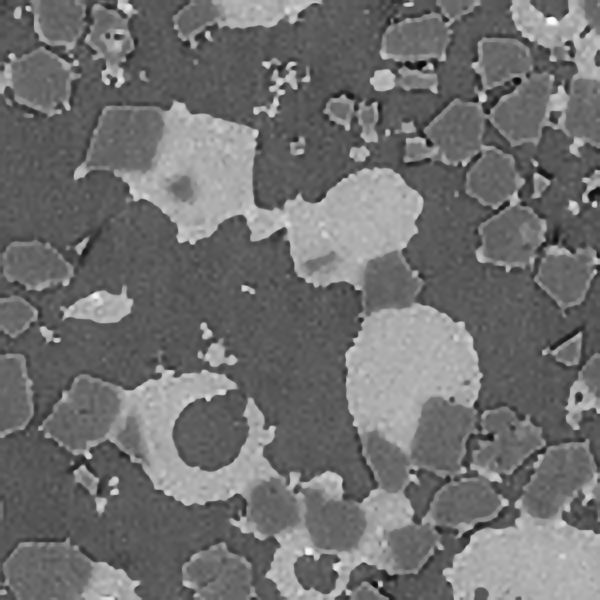}
\end{subfigure}%
\hfill
\begin{subfigure}[t]{.14\textwidth}
  \includegraphics[width=\linewidth]{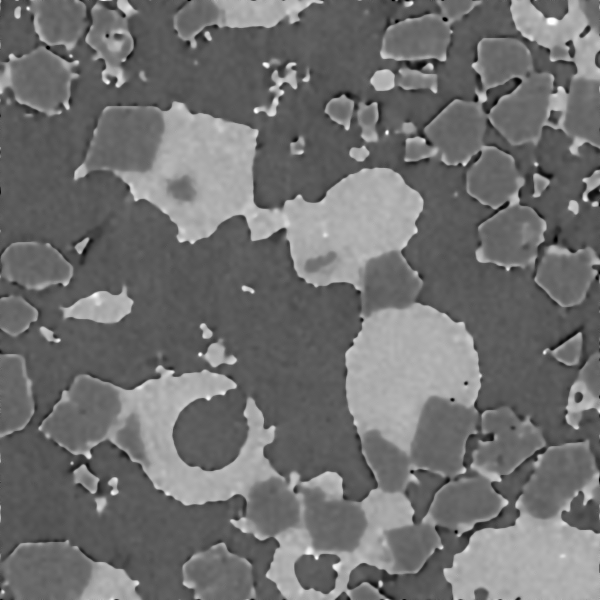}
\end{subfigure}%
\hfill
\begin{subfigure}[t]{.14\textwidth}
  \includegraphics[width=\linewidth]{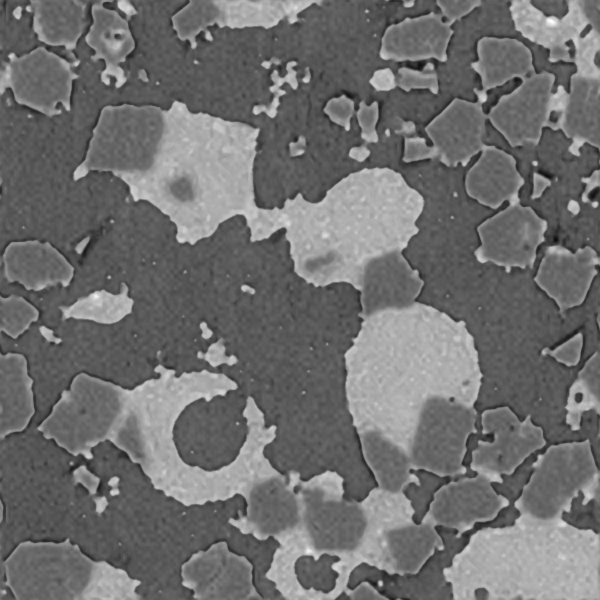}
\end{subfigure}%
\hfill
\begin{subfigure}[t]{.14\textwidth}
  \includegraphics[width=\linewidth]{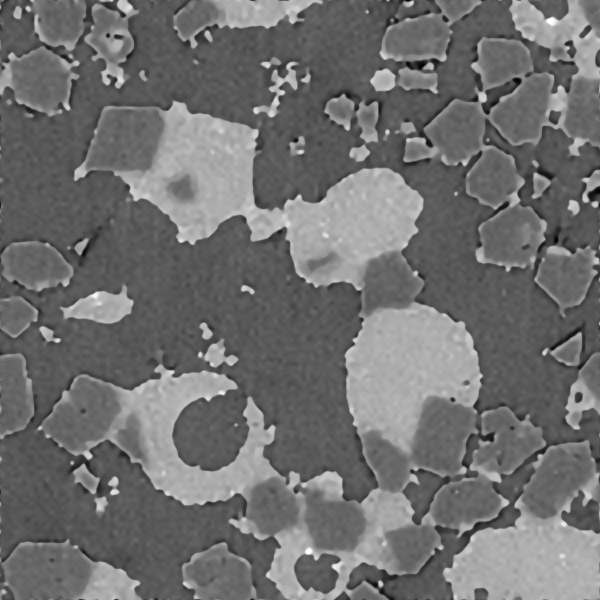}
\end{subfigure}%
\hspace{0.03cm}
\begin{subfigure}[t]{.14\textwidth}
  \includegraphics[width=\linewidth]{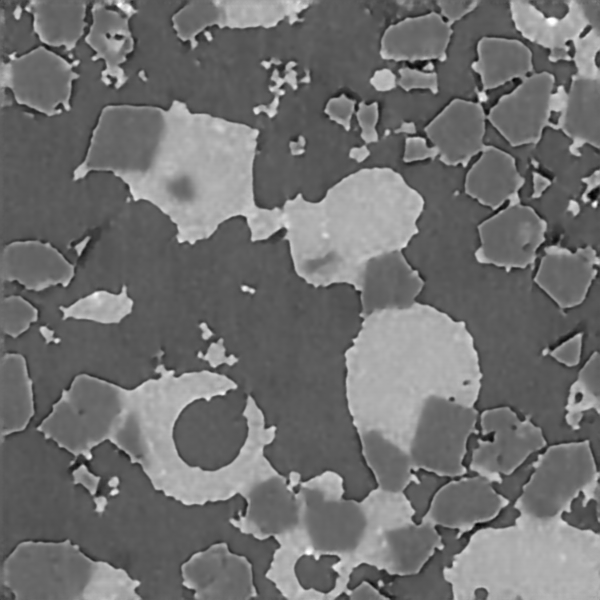}
\end{subfigure}%

\begin{subfigure}[t]{.14\textwidth}
  \includegraphics[width=\linewidth]{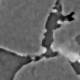}
\end{subfigure}%
\hfill
\begin{subfigure}[t]{.14\textwidth}
  \includegraphics[width=\linewidth]{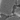}
\end{subfigure}%
\hfill
\begin{subfigure}[t]{.14\textwidth}
  \includegraphics[width=\linewidth]{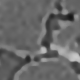}
\end{subfigure}%
\hfill
\begin{subfigure}[t]{.14\textwidth}
  \includegraphics[width=\linewidth]{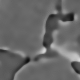}
\end{subfigure}%
\hfill
\begin{subfigure}[t]{.14\textwidth}
  \includegraphics[width=\linewidth]{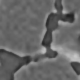}
\end{subfigure}%
\hfill
\begin{subfigure}[t]{.14\textwidth}
  \includegraphics[width=\linewidth]{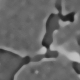}
\end{subfigure}%
\hspace{0.03cm}
\begin{subfigure}[t]{.14\textwidth}
  \includegraphics[width=\linewidth]{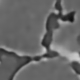}
\end{subfigure}%

\vspace{.5cm}

\begin{subfigure}[t]{.14\textwidth}
  \includegraphics[width=\linewidth]{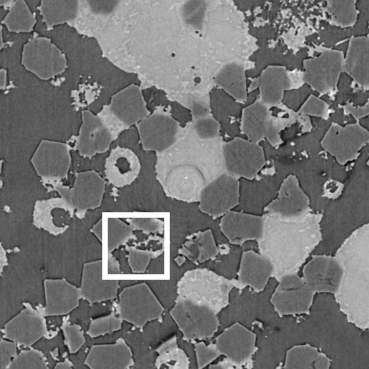}
\end{subfigure}%
\hfill
\begin{subfigure}[t]{.14\textwidth}
  \includegraphics[width=\linewidth]{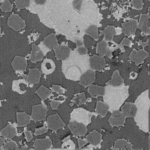}
\end{subfigure}%
\hfill
\begin{subfigure}[t]{.14\textwidth}
  \includegraphics[width=\linewidth]{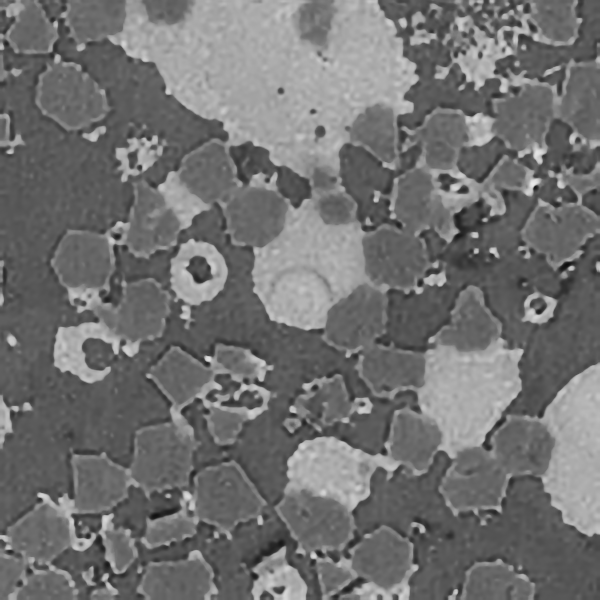}
\end{subfigure}%
\hfill
\begin{subfigure}[t]{.14\textwidth}
  \includegraphics[width=\linewidth]{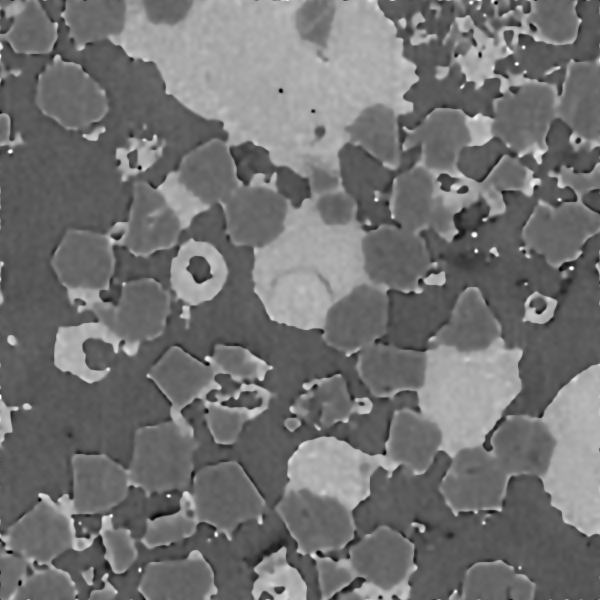}
\end{subfigure}%
\hfill
\begin{subfigure}[t]{.14\textwidth}
  \includegraphics[width=\linewidth]{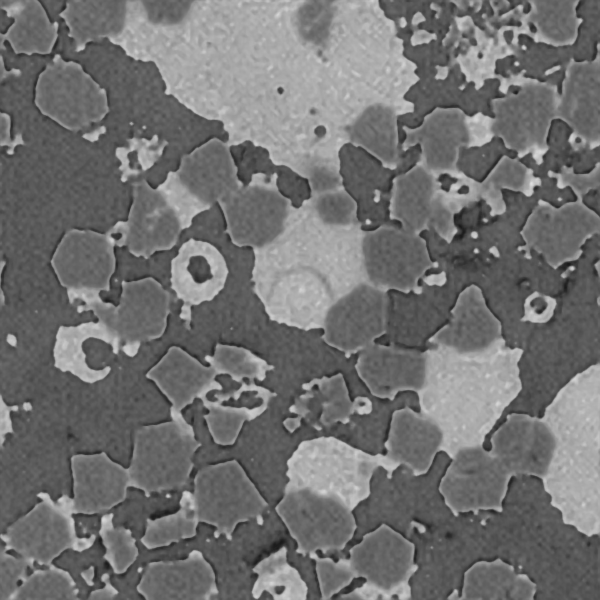}
\end{subfigure}%
\hfill
\begin{subfigure}[t]{.14\textwidth}
  \includegraphics[width=\linewidth]{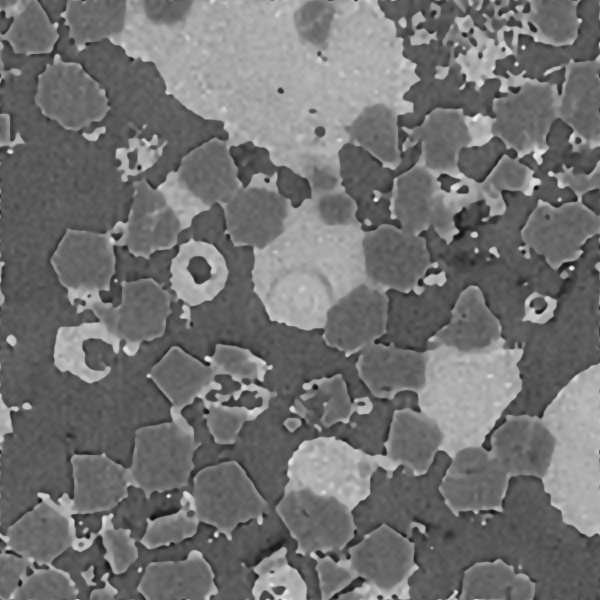}
\end{subfigure}%
\hspace{0.03cm}
\begin{subfigure}[t]{.14\textwidth}
  \includegraphics[width=\linewidth]{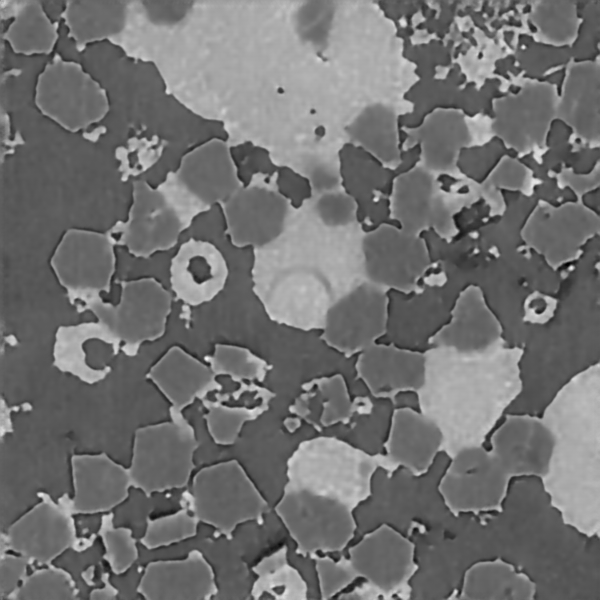}
\end{subfigure}%

\begin{subfigure}[t]{.14\textwidth}
  \includegraphics[width=\linewidth]{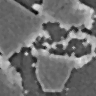}
\end{subfigure}%
\hfill
\begin{subfigure}[t]{.14\textwidth}
  \includegraphics[width=\linewidth]{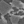}
\end{subfigure}%
\hfill
\begin{subfigure}[t]{.14\textwidth}
  \includegraphics[width=\linewidth]{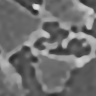}
\end{subfigure}%
\hfill
\begin{subfigure}[t]{.14\textwidth}
  \includegraphics[width=\linewidth]{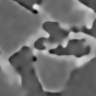}
\end{subfigure}%
\hfill
\begin{subfigure}[t]{.14\textwidth}
  \includegraphics[width=\linewidth]{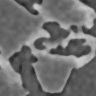}
\end{subfigure}%
\hfill
\begin{subfigure}[t]{.14\textwidth}
  \includegraphics[width=\linewidth]{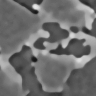}
\end{subfigure}%
\hspace{0.03cm}
\begin{subfigure}[t]{.14\textwidth}
  \includegraphics[width=\linewidth]{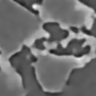}
\end{subfigure}%

\vspace{.5cm}
\begin{subfigure}[t]{.14\textwidth}
  \includegraphics[width=\linewidth]{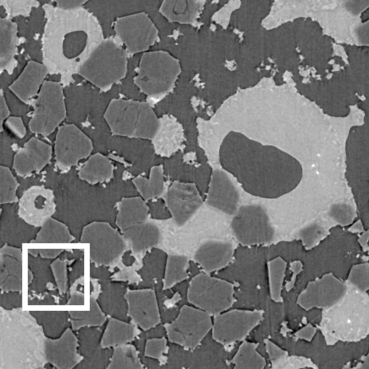}
\end{subfigure}%
\hfill
\begin{subfigure}[t]{.14\textwidth}
  \includegraphics[width=\linewidth]{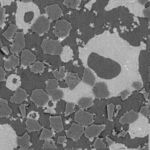}
\end{subfigure}%
\hfill
\begin{subfigure}[t]{.14\textwidth}
  \includegraphics[width=\linewidth]{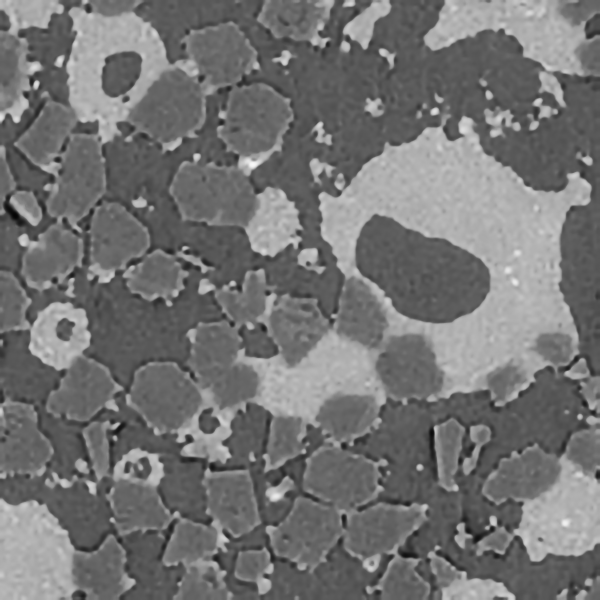}
\end{subfigure}%
\hfill
\begin{subfigure}[t]{.14\textwidth}
  \includegraphics[width=\linewidth]{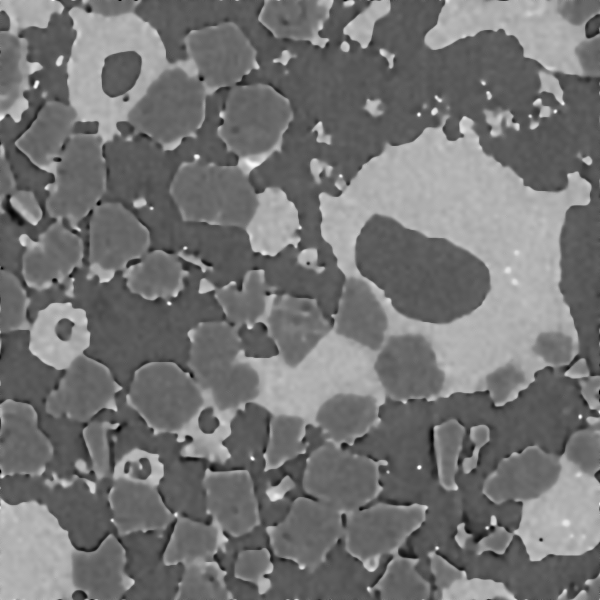}
\end{subfigure}%
\hfill
\begin{subfigure}[t]{.14\textwidth}
  \includegraphics[width=\linewidth]{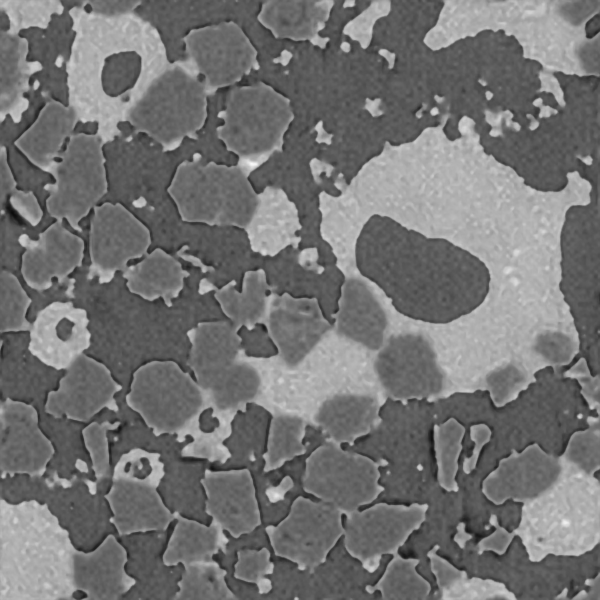}
\end{subfigure}%
\hfill
\begin{subfigure}[t]{.14\textwidth}
  \includegraphics[width=\linewidth]{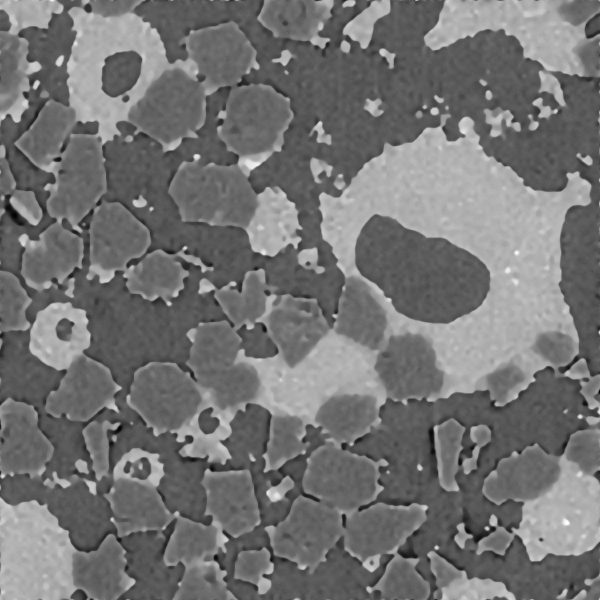}
\end{subfigure}%
\hspace{0.03cm}
\begin{subfigure}[t]{.14\textwidth}
  \includegraphics[width=\linewidth]{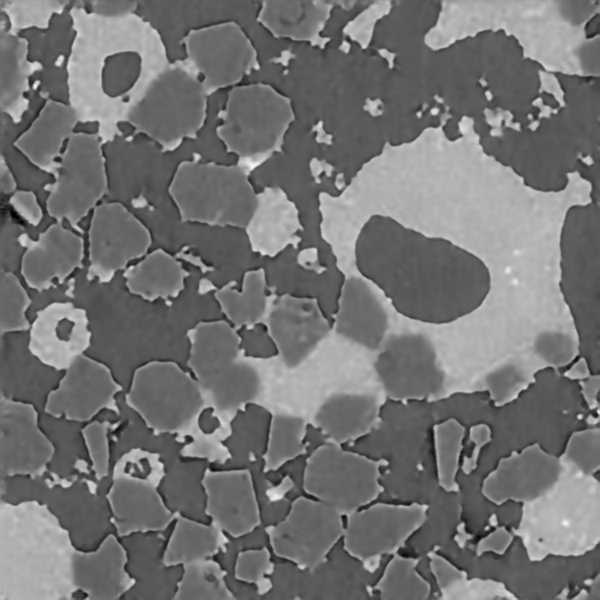}
\end{subfigure}%

\begin{subfigure}[t]{.14\textwidth}
  \includegraphics[width=\linewidth]{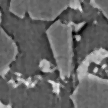}
  \caption*{HR}
\end{subfigure}%
\hfill
\begin{subfigure}[t]{.14\textwidth}
  \includegraphics[width=\linewidth]{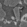}
  \caption*{LR}
\end{subfigure}%
\hfill
\begin{subfigure}[t]{.14\textwidth}
  \includegraphics[width=\linewidth]{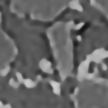}
  \caption*{DIP+TV}
\end{subfigure}%
\hfill
\begin{subfigure}[t]{.14\textwidth}
  \includegraphics[width=\linewidth]{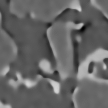}
  \caption*{EPLL}
\end{subfigure}%
\hfill
\begin{subfigure}[t]{.14\textwidth}
  \includegraphics[width=\linewidth]{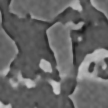}
  \caption*{WPP}
\end{subfigure}%
\hfill
\begin{subfigure}[t]{.14\textwidth}
  \includegraphics[width=\linewidth]{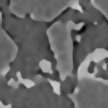}
  \caption*{patchNR}
\end{subfigure}%
\hspace{0.03cm}
\begin{subfigure}[t]{.14\textwidth}
  \includegraphics[width=\linewidth]{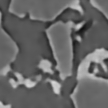}
  \captionsetup{justification=centering}
  \caption*{ACNN \\(data-based)}
\end{subfigure}%
\caption{Comparison of different methods for superresolution. The zoomed-in part is marked with a white box in the ground truth image.
\textit{Top}: full image. \textit{Bottom}: zoomed-in part.
} \label{fig:SIC_further}
\end{figure*}

\end{document}